\documentclass{article}
\usepackage{arxiv}
\renewcommand{\today}{September 30, 2026}

\usepackage[utf8]{inputenc} %
\usepackage[T1]{fontenc}    %
\usepackage{url}            %
\usepackage{booktabs}       %
\usepackage{amsfonts}       %
\usepackage{amsmath}        %
\usepackage{mathtools}      %
\usepackage{mathdots}       %
\usepackage{nicefrac}       %
\usepackage{graphicx}       %
\usepackage{float}          %
\usepackage{bm}             %
\usepackage{lineno}         %
\usepackage{todonotes}      %
\usepackage{pgfplotstable}  %
\usepackage{pgfplots}
\pgfplotsset{compat=1.18}
\usepgfplotslibrary{groupplots}
\usepackage[numbers]{natbib}
\usepackage{tcolorbox}      %
\tcbuselibrary{breakable}

\usepackage{amsthm}

\theoremstyle{definition}
\newtheorem{definition}{Definition}[section]

\theoremstyle{plain}
\newtheorem{theorem}{Theorem}[section]

\theoremstyle{plain}

\theoremstyle{remark}
\newtheorem*{remark}{Remark}

\usepackage[ruled,vlined]{algorithm2e}
\DontPrintSemicolon

\usepackage{hyperref}       %
\hypersetup{ colorlinks=false, pdfborder={0 0 0} }
\usepackage{cleveref}       %
\crefname{algocf}{algorithm}{algorithms}
\Crefname{algocf}{Algorithm}{Algorithms}

\AddToHook{cmd/appendix/before}{\crefalias{section}{appendix}}
\AddToHook{cmd/appendix/before}{\crefalias{subsection}{appendix}}

\crefname{subsection}{section}{sections}
\Crefname{subsection}{Section}{Sections}

\crefname{appendix}{Appendix}{Appendices}
\Crefname{appendix}{Appendix}{Appendices}

\graphicspath{ {./figures/} }

\DeclareMathOperator{\tr}{tr}

\DeclareMathOperator{\diag}{diag}
\DeclareMathOperator{\blkdiag}{blkdiag}

\DeclareMathOperator{\cov}{cov}
\DeclareMathOperator{\chol}{chol}

\newcommand{\vh}{\mathbf{h}}    %
\newcommand{\vo}{\mathbf{y}}    %
\newcommand{\Mo}{\mathbf{C}}    %
\newcommand{\Mf}{\mathbf{D}}    %

\newcommand{\naturals}{\mathbb{N}}

\newcommand{\reals}{\mathbb{R}}
\newcommand{\integers}{\mathbb{Z}}

\newcommand{\dr}{\ensuremath{\mathrm{d}}}

\newcommand{\eu}{\ensuremath{\mathrm{e}}}

\newcommand{\du}{\ensuremath{\mathrm{d}}}

\newcommand{\cD}{\mathcal{D}}
\newcommand{\cI}{\mathcal{I}}

\newcommand{\cN}{\mathcal{N}}
\newcommand{\cO}{\mathcal{O}}
\newcommand{\cS}{\bm{\mathcal{S}}}

\newcommand{\boD}{\overline{\bD}}

\newcommand{\boT}{\overline{\bT}}
\newcommand{\boU}{\overline{\bU}}
\newcommand{\boV}{\overline{\bV}}

\newcommand{\btD}{\tilde{\bD}}
\newcommand{\btF}{\tilde{\bF}}
\newcommand{\btP}{\tilde{\bP}}

\newcommand{\btV}{\tilde{\bV}}

\newcommand{\bbD}{\breve{\bD}}

\newcommand{\bbP}{\breve{\bP}}

\newcommand{\bbR}{\breve{\bR}}

\newcommand{\bbT}{\breve{\bT}}
\newcommand{\bbU}{\breve{\bU}}
\newcommand{\bbV}{\breve{\bV}}
\newcommand{\bbQ}{\breve{\bQ}}

\newcommand{\bA}{\mathbf{A}}
\newcommand{\bB}{\mathbf{B}}
\newcommand{\bC}{\mathbf{C}}
\newcommand{\bD}{\mathbf{D}}
\newcommand{\bM}{\mathbf{M}}
\newcommand{\bN}{\mathbf{N}}
\newcommand{\bQ}{\mathbf{Q}}
\newcommand{\bK}{\mathbf{K}}
\newcommand{\bG}{\mathbf{G}}

\newcommand{\bP}{\mathbf{P}}

\newcommand{\bT}{\mathbf{T}}
\newcommand{\bV}{\mathbf{V}}
\newcommand{\bL}{\mathbf{L}}
\newcommand{\bE}{\mathbf{E}}
\newcommand{\bX}{\mathbf{X}}

\newcommand{\bI}{\mathbf{I}}
\newcommand{\bW}{\mathbf{W}}
\newcommand{\bR}{\mathbf{R}}

\newcommand{\bU}{\mathbf{U}}
\newcommand{\bZ}{\mathbf{Z}}
\newcommand{\bH}{\mathbf{H}}
\newcommand{\bJ}{\mathbf{J}}
\newcommand{\bF}{\mathbf{F}}

\newcommand{\bLambda}{\bm{\Lambda}}
\newcommand{\bSigma}{\bm{\Sigma}}
\newcommand{\by}{\mathbf{y}}

\newcommand{\bx}{\mathbf{x}}

\newcommand{\bu}{\mathbf{u}}

\newcommand{\bb}{\mathbf{b}}
\newcommand{\bc}{\mathbf{c}}

\newcommand{\be}{\mathbf{e}}
\newcommand{\bff}{\mathbf{f}}

\newcommand{\bh}{\mathbf{h}}

\newcommand{\bz}{\mathbf{z}}

\newcommand{\bt}{\mathbf{t}}
\newcommand{\bv}{\mathbf{v}}
\newcommand{\bw}{\mathbf{w}}

\newcommand{\bzero}{\mathbf{0}}

\newcommand{\btheta}{\bm{\theta}}
\newcommand{\bTheta}{\bm{\Theta}}

\newcommand{\bUps}{\mathbf{\Upsilon}}

\newcommand{\nhyper}{n_{\mathrm{h}}}
\newcommand{\ntest}{n_{\mathrm{t}}}

\DeclareMathOperator*{\argmax}{argmax}

\title{On the Sequentially Semiseparable Structure of SDE-Induced Kernels and Efficient Algorithms for Kernel-Based Estimation Problems}
\author{
    Mikkel Paltorp \\
    Department of Applied Mathematics and Computer Science\\
    Technical University of Denmark\\
    \texttt{mpasc@dtu.dk}
    \And
    Luigi Caglio \\
    Department of Civil and Mechanical Engineering\\
    Technical University of Denmark\\
    \texttt{lucag@dtu.dk}
    \And
    Tianshi Chen\\
    School of Data Science and Shenzhen Research Institute of Big Data\\
    The Chinese University of Hong Kong, Shenzhen, 58182, China\\
    \texttt{tschen@cuhk.edu.cn}
    \And
    Evangelos Katsanos\\
    Department of Civil and Mechanical Engineering\\
    Technical University of Denmark\\
    \texttt{vakat@dtu.dk}
    \And
    Martin S. Andersen \\
    Department of Applied Mathematics and Computer Science\\
    Technical University of Denmark\\
    \texttt{mskan@dtu.dk}
}

\newcommand{\journallayout}[2]{#2}

\newcommand{\journalmultline}[2]{%
  \journallayout{%
    \begin{multline*}
      #1\\
      #2
    \end{multline*}%
  }{%
    \begin{align*}
      #1#2
    \end{align*}%
  }%
}

\newcommand{\layoutidentity}[2]{%
  \journallayout{\underbrace{#2}_{#1}}{#1 = #2}%
}

\newcommand{\journalcompactmath}[1]{%
  \journallayout{%
    \vcenter{\hbox{\footnotesize\setlength{\arraycolsep}{3pt}%
      $\displaystyle #1$}}%
  }{#1}%
}

\begin{document}
\maketitle

\begin{abstract}
	We study kernel functions induced by a linear time-invariant (LTI) stochastic differential equation (SDE), an approach that embeds prior knowledge about the function to be estimated directly into the state-space description of the SDE. This kernel-design perspective is relevant across the many settings where kernels play a central role, including kernel methods in statistics, Gaussian process (GP) regression in machine learning, and kernel-based regularization methods in system identification. The resulting class of SDE-induced kernels includes several kernels of practical interest, such as Matérn kernels with half-integer smoothness, spline kernels, and related kernels used in system identification. In contrast to existing linear-time GP regression methods for these kernels, which are typically based on Kalman filtering and Rauch--Tung--Striebel smoothing on the latent state, we work directly with the SSS representation of the observed covariance matrix. This yields linear-complexity recursive algorithms for the matrix operations required in GP regression, including factorization, linear solves, log determinants, posterior covariance, log-marginal-likelihood gradients, and posterior sampling; the same generator-level recursions extend, with only minor modifications, to the analogous computations in kernel-based regularization methods for system identification. Compared with previous uses of SSS representations in GP regression, we derive efficient recursive algorithms for analytic marginal-likelihood gradients in three application-relevant cases, and we develop new recursions for posterior computations and Matheron-type sampling. Numerical experiments verify the accuracy of the proposed recursions and illustrate posterior inference and sampling for representative SDE-induced kernels.
\end{abstract}

\keywords{
	Numerical linear algebra            \and
	Sequentially semiseparable matrices \and
	Gaussian processes
}

\section{Introduction}\label{sec:introduction}

Kernels are the central object in a broad range of estimation problems, including kernel methods in statistics \cite{Wah:90,KiW:70}, kernel methods in machine learning \cite{HSS:08}, most notably Gaussian process (GP) regression \cite{rasmussen2006a}, and kernel-based regularization methods for system identification \cite{pillonetto2014a,pillonetto2022a,chen2018a}. Across these fields, kernel design is a crucial step: the kernel function should be parameterized such that it incorporates suitable prior knowledge about the function being estimated while, ideally, inducing a kernel matrix with favorable mathematical structure that enables efficient computation. Kernels can be designed in many ways; in this paper, we consider kernels that are induced by a linear time-invariant (LTI) stochastic differential equation (SDE). This construction achieves both design goals at once: prior knowledge about the function to be estimated is embedded in the state-space matrices of the SDE, and the resulting kernel matrix is guaranteed to admit a sequentially semiseparable (SSS) representation for ordered one-dimensional inputs. This SSS structure, in turn, enables linear-time algorithms for the matrix computations that underlie kernel-based estimation.

GP regression with one-dimensional inputs is a setting in which the connection between SDEs and Markovian state-space realizations has been studied extensively. Classical approaches exploit this connection by converting the regression problem into Kalman filtering and Rauch--Tung--Striebel (RTS) smoothing on the latent state \cite{rauch1965a,hartikainen2010a,sarkka:sde,Rubenzahl2026}, giving linear complexity for important kernels such as Matérn kernels with half-integer smoothness and spline-related kernels. This perspective, however, treats the SDE model chiefly as a computational device for GP inference. The complementary perspective adopted in this paper is that the SDE is itself a kernel design tool: it specifies how prior structural knowledge about the process, such as smoothness or decay rate, is encoded in the state-space representation of the SDE, and it simultaneously fixes the rank structure of the resulting kernel matrix. Kernel design from prior structural or physical knowledge is a long-standing theme in system identification \cite{chen2018a,pillonetto2014a,pillonetto2022a,chen2021a}, but, to our knowledge, LTI SDE models have not previously been exploited as an explicit route to structured kernel matrices outside of the state-space GP regression literature.

We show that the covariance matrix of the observed process itself admits an SSS representation, and that this remains true even when the corresponding precision matrix is not block tridiagonal. The SSS representation therefore provides a direct structured-matrix route to kernel-based estimation. In particular, we show that the standard matrix computations required for GP regression can all be expressed as linear-time recursions on SSS generators.

We start by formally defining the class of kernels that we consider in this paper.
\begin{definition}[LTI-SDE-induced kernel]\label{def:lti-sde-kernel}
A function $k\colon \reals \times \reals \to \reals^{d\times d}$ is an \emph{LTI-SDE-induced kernel} if it is the matrix-valued covariance function of a stochastic process $\by(t)$ admitting a state-space representation of the form \cite{sarkka:sde,hartikainen2010a}
\begin{subequations}\label{eq:sde}
	\begin{align}
	\label{eq:sde:state}
		\frac{\dr \bx(t)}{\dr t} & =\bF\bx(t)+\bG\bw(t), \\
		\by(t)                   & =\bC(t)\bx(t),
	\end{align}
\end{subequations}
where $\bF\in \reals^{p\times p}$, $\bG\in \reals^{p\times r}$, and $\Mo(t)\in \reals^{d\times p}$. Here, $\bw(t)\in\reals^{r}$ denotes zero-mean Gaussian white noise, i.e., the formal derivative of a Brownian motion with diffusion matrix $\bQ_{\bw} \in \reals^{r\times r}$, so that $\mathbb{E}[\bw(t)\bw(t')^\top] = \bQ_{\bw}\,\delta(t-t')$.
\end{definition}
\begin{remark}
The state equation in \eqref{eq:sde:state} is understood as shorthand for the corresponding Itô SDE
	\begin{align*}
	\du \bx(t) & = \bF\bx(t)\du t + \bG \bQ_{\bw}^{1/2} \du \bW(t),
\end{align*}
where $\bW(t)$ is a standard $r$-dimensional Brownian motion. The process $\by(t)$ is then a zero-mean Gaussian process with kernel function $k$.
\end{remark}
The model \eqref{eq:sde} will often depend on a vector of $\nhyper $ hyperparameters, which will be denoted by $\btheta = (\theta_1,\ldots,\theta_{\nhyper})$, in which case we write
\[ k(t,t';\btheta) = \cov(\by(t;\btheta),\by(t';\btheta)), \]
where $\cov(\bu,\bv) = \mathbb{E}[\bu\bv^\top] -\mathbb{E}[\bu]\mathbb{E}[\bv^\top]$ denotes the cross-covariance of vector random variables $\bu$ and $\bv$. To simplify notation, we will frequently omit $\btheta$. Throughout, the term \emph{state-space representation} refers to the latent state-space model \eqref{eq:sde} that defines the process $\by(t)$ and induces the kernel $k$.

Given a nondecreasing sequence of inputs $(t_1,\ldots,t_n)$, we define the covariance matrix $\bK \in \reals^{nd \times nd}$ with $d\times d$ blocks $\bK_{ij} = k(t_i,t_j)$. We will show that $\bK$ admits an SSS representation of the form
\[
	\bK = \bU(\bI-\bN^p\bT)^{-1}\bN^p\bV^\top + \bV(\bN^p)^\top(\bI-\bN^p\bT)^{-\top}\bU^\top + \bD,
\]
where $\bU,\bV,\bT$, and $\bD$ are block-diagonal matrices and $\bN$ is the lower shift matrix such that $\bN^p$ is a Toeplitz matrix with ones on the $p$th subdiagonal and zeros elsewhere. The matrix $\bK$ can be expanded as a block matrix with blocks $\bK_{ij}$ given by
\begin{align}\label{eq:sss:symmetric}
	\bK_{ij} = \begin{cases}
		           \bU_i\bT_{i-1:j+1}\bV_j^\top,        & i>j, \\
		           \bD_i,                               & i=j, \\
		           \bV_i(\bT_{j-1:i+1})^\top\bU_j^\top, & i<j,
	           \end{cases}
\end{align}
where $\bU_i \in \reals^{d \times p},\bV_i\in \reals^{d \times p},\bT_i\in\reals^{p \times p}$, and $\bD_i\in \reals^{d \times d}$ denote the blocks of $\bU,\bV,\bT$, and $\bD$, respectively, and $\bT_{i:j}$ denotes the descending-index chain product of square matrices $\bT_i,\ldots,\bT_j$, defined as
\begin{equation}\label{eq:chain:product}
	\bT_{i:j} =
	\begin{cases}
		\bT_{i}\bT_{i-1}\cdots\bT_j, & i > j, \\
		\bT_i,                       & i = j, \\
		\bI,                         & i < j.
	\end{cases}
\end{equation}
\Cref{sec:sde-to-sss} uses only this symmetric covariance form to connect \eqref{eq:sde} and \eqref{eq:sss:symmetric}. We note that in the special case where $\bT_1,\ldots,\bT_n$ are all identity matrices, the SSS representation \eqref{eq:sss:symmetric} reduces to a generator representation of a semiseparable matrix \cite{andersen2020a,golub13}. A more general nonsymmetric SSS representation, which is useful for certain matrix-vector products and trace computations in GP regression, will be introduced in \Cref{sec:background}.

This work is related to several existing approaches for scalable temporal GP regression and kernel-based estimation. A state-space formulation of a GP provides a natural way to perform inference by exploiting the finite-dimensional state-space representation of the underlying stochastic process via Kalman filtering and RTS smoothing. Rank-structured covariance methods instead work directly with generator-representable semiseparable kernel matrices, as in celerite \cite{celerite} and related algorithms \cite{andersen2020a,shen2025}. Sparse-precision approaches use the Markov structure of the latent state process, leading to banded or block-tridiagonal precision matrices \cite{meurant1992,grigorievskiy2017,chen2018a}. A subtle but important distinction is that the Markov property applies to the latent state $\bx(t)$, not necessarily to the observed process $\by(t)=\bC(t)\bx(t)$. After marginalizing or projecting the state through $\bC(t)$, the covariance matrix of $\by(t_1),\ldots,\by(t_n)$ remains structured, but its inverse need not be block tridiagonal; see \Cref{app:block:tridiagonal}. Hence sparse-precision methods based on the latent state do not directly apply in this setting. A parallel line of work in system identification designs kernel matrices directly and separately exploits their semiseparable structure for regularized estimation \cite{chen2018a,chen2021a,andersen2020a}, without deriving the kernel from an explicit SDE. The present paper connects these perspectives: SDE-based kernel design supplies the state-space parameterization $(\bF,\bG,\bC)$ used throughout state-space GP regression, control, and system identification, while the SSS representation of the resulting kernel matrix in \eqref{eq:sss:symmetric} supplies the structure that both GP regression and kernel-based regularization methods can exploit for efficient computation.

The proposed approach is not intended to replace Kalman filtering and RTS smoothing as a state-space inference method. Rather, it provides an equivalent matrix formulation in which the standard kernel-based estimation primitives are expressed directly as structured matrix operations. This perspective is particularly convenient for deriving marginal-likelihood gradients, selected posterior variances, and Matheron-type posterior sampling rules.

\paragraph{Contributions} The main contributions of this paper are as follows:

\begin{itemize}
	\item \textbf{SDE-based kernel design and SSS representations of output covariance matrices:} We show how covariance matrices generated by observed outputs of continuous-time LTI SDEs admit block-SSS representations for ordered one-dimensional inputs, covering both stationary and nonstationary kernels. This connects the SDE-based kernel design perspective used in state-space GP regression with the semiseparable structure exploited by kernel-based regularization methods for system identification.
	\item \textbf{Efficient recursive algorithms:} We derive linear-time SSS recursions for the GP matrix primitives required for training and inference: factorization, triangular solves, log determinants, selected inverse diagonals, and structured matrix-vector products. These algorithms generalize existing algorithms for generator-representable semiseparable matrices to the more general SSS setting and circumvent known numerical issues associated with the generator representation of semiseparable matrices \cite{celerite,andersen2020a}.
	\item \textbf{Novel trace computation:} We derive a trace recursion for $\tr((\bK + \bSigma)^{-1}\bA)$ when $\bK$ and $\bA$ are given in SSS form and $\bSigma$ is block diagonal and symmetric positive definite, enabling analytic marginal-likelihood gradients without forming dense matrices or intermediate SSS products.
	\item \textbf{Scalable inference:} We show how the same representation supports posterior mean and variance computations for interleaved training and test points, as well as posterior sampling through Matheron's update rule.
	\item \textbf{Examples:} We demonstrate the resulting framework on LTI-SDE-induced kernels, including
	      Matérn, spline, celerite, DC, TC, and stable spline kernels.
\end{itemize}

\paragraph{Outline} \Cref{sec:background} outlines notation and reviews the necessary background on SSS matrices. \Cref{sec:sde-to-sss} shows how SDE-induced kernels give rise to SSS covariance matrices and
illustrates the construction for stationary, nonstationary, and transformed
kernels. \Cref{sec:algorithms} derives the linear-time SSS algorithms, which are used in \Cref{sec:gp:regression} as building blocks for GP training, inference, and sampling routines. \Cref{sec:numerical_experiments} presents numerical experiments, and \Cref{sec:conclusion} concludes.

\section{Background}\label{sec:background}
We now introduce the notation and summarize the key properties of SSS matrices that will be used in the subsequent sections. We also provide a brief review of a general SSS matrix representation that we will need for some of the GP primitives.

\subsection{Notation}\label{sub:sec:notation}
The set $\naturals$ denotes the natural numbers $\{1,2,\ldots\}$, $\integers$ denotes the set of integers, and $\integers_+$ is the set of nonnegative integers. Similarly, $\reals^n$ is the standard Euclidean space of dimension $n$, and $\reals^{m\times n}$ is the set of $m\times n$ real matrices. The vector $\be_k$ denotes the $k$th canonical basis vector, with dimension inferred from context, and $\bI_n$ denotes the identity matrix of order $n$, or simply $\bI$ when the order can be inferred from context.

Given a sequence $(t_1, \ldots, t_n)$ of real numbers, we define $\Delta t_i = t_{i+1} - t_i$ for $i \in \{1,\ldots,n-1\}$ and, for ease of notation, we define $\Delta t_n = 0$. Given $n$ vectors $\bx_1,\dots, \bx_n \in \reals^d$, we write their vertical concatenation as $\bx = (\bx_1, \dots, \bx_n) \in \reals^{nd}$. Similarly, for matrices $\bX_1,\dots, \bX_n \in \reals^{r\times s}$, we denote by $\bX = \blkdiag(\bX_1, \dots, \bX_n) \in \reals^{nr\times ns}$ the block-diagonal matrix with diagonal blocks $\bX_1,\dots,\bX_n$.

The square matrix $\bN_n$ denotes the down-shift matrix of order $n$, which has ones on the first subdiagonal and zeros elsewhere. The subscript will be dropped when the order can be inferred from context. We will frequently use the block down-shift identity $\bN_n \otimes \bI_s = \bN_{ns}^s$, where $\otimes$ denotes the Kronecker product. Thus, if $\bX = \blkdiag(\bX_1,\ldots,\bX_n)$ for $\bX_1,\ldots,\bX_n \in \reals^{s \times s}$, then the block-matrices $\bI - \bN_{ns}^s\bX$ and $(\bI - \bN_{ns}^s\bX)^{-1}\bN_{ns}^s$ are given by
\begin{align}\label{eq:blk:bidiag}
	\journalcompactmath{
	\layoutidentity
	{\bI - \bN^s\bX}
	{\begin{bmatrix}
		\bI                                         \\
		-\bX_1 & \bI                                \\
		       & -\bX_2 & \bI                       \\
		       &        & \ddots & \ddots           \\
		       &        &        & -\bX_{n-1} & \bI
	\end{bmatrix}},\quad
	\layoutidentity
	{(\bI - \bN^s\bX)^{-1}\bN^s}
	{\begin{bmatrix}
		\bzero                                               \\
		\bI         & \bzero                                 \\
		\bX_{2:2}   & \bI         & \ddots                   \\
		\vdots      & \vdots      & \ddots & \ddots          \\
		\bX_{n-1:2} & \bX_{n-1:3} & \cdots & \bI    & \bzero
	\end{bmatrix}}.
	}
\end{align}

\subsection{Sequentially semiseparable matrices}

Semiseparable and sequentially semiseparable matrices are classes of rank-structured matrices. Our main focus will be the class of SSS matrices. This class and the associated numerical linear algebra routines have been studied extensively in the numerical linear algebra community and have been applied to a wide range of problems, including integral equations and time-variant systems \cite{dewilde2024a,chandrasekaran2003b,eidelman2014a,golub13,chandrasekaran2003a,chandrasekaran2005a}.
SSS matrices also arise in application areas such as automatic control, machine learning, and scientific computing, for example, in Gaussian process regression for system identification \cite{andersen2020a,celerite,celerite2,shen2025} and modern deep learning sequence models \cite{mamba1,mamba2,mamba3}.

A square matrix $\bA \in\reals^{nd\times nd}$ is said to be SSS if there exist integers $p,q\geq 1$ and block-diagonal matrices $\bT \in \reals^{np \times np}$, $\bU,\bV\in \reals^{nd \times np}$, $\bR \in \reals^{nq\times nq}$, $\bP,\bQ \in \reals^{nd\times nq}$, and $\bD \in \reals^{nd\times nd}$, with blocks
\begin{align*}
	\bT_i       & \in \reals^{p\times p}, &
	\bU_i,\bV_i & \in \reals^{d\times p}, &
	\bR_i       & \in \reals^{q\times q}, &
	\bP_i,\bQ_i & \in \reals^{d\times q}, &
	\bD_i       & \in \reals^{d\times d},
\end{align*}
such that \cite{dewilde2024a,chandrasekaran2003b}
\begin{equation}\label{eq:def:nonsym:A}
	\bA = \bU (\bI - \bN^p\bT)^{-1}\bN^p\bV^\top + \bP(\bN^q)^\top (\bI - \bN^q\bR)^{-\top}\bQ^\top + \bD.
\end{equation}
The integers $p$ and $q$ are called the lower and upper order of the representation. We note that $\bU_1$, $\bT_1$, $\bT_n$, $\bV_n$, $\bQ_1$, $\bR_1$, $\bR_n$, and $\bP_n$ are included for notational convenience and consistency with state-space models; these matrices do not have any direct influence on $\bA$ in \eqref{eq:def:nonsym:A}.
Indeed, the full structure of the SSS representation expands to
\begin{equation}\label{eq:full:nonsym:A}
	\journalcompactmath{
	\bA =
	\begin{bmatrix}
		\bD_1                         & \bP_1\bQ_2^\top             & \bP_1\bR_2^\top\bQ_3^\top & \cdots & \bP_1\bR_{n-1:2}^\top\bQ_n^\top \\
		\bU_2\bV_1^\top               & \bD_2                       & \bP_2\bQ_3^\top           & \cdots & \bP_2\bR_{n-1:3}^\top\bQ_n^\top \\
		\bU_3\bT_2\bV_1^\top          & \bU_3\bV_2^\top             & \bD_3                     & \cdots & \bP_3\bR_{n-1:4}^\top\bQ_n^\top \\
		\vdots                        & \vdots                      & \vdots                    & \ddots & \vdots \\
		\bU_n\bT_{n-1:2}\bV_1^\top    & \bU_n\bT_{n-1:3}\bV_2^\top & \bU_n\bT_{n-1:4}\bV_3^\top & \cdots & \bD_n
	\end{bmatrix}.
	}
\end{equation}

The SSS representation can be stored efficiently by storing its generators $\bT_i$, $\bU_i$, $\bV_i$, $\bR_i$, $\bP_i$, $\bQ_i$, and $\bD_i$ for $i\in \{1,\ldots,n\}$, which requires $\cO(n(d(p+q) + d^2 + p^2 + q^2))$ storage. Moreover, key linear algebra operations such as matrix-vector multiplication, triangular solves, and factorizations can be performed efficiently with linear complexity with respect to $n$ \cite{dewilde2024a,vandebril2007a,shen2025,eidelman1999a,eidelman2014a}. The structure of the SSS representation may be linked to forward and backward recursions \cite{dewilde2024a}. This is readily seen from the multiplication with the strictly lower triangular part, i.e., $\by^L=\bU (\bI - \bN^p\bT)^{-1}\bN^p\bV^\top\bx$, which may be viewed as a two-step process where $\bh = (\bI - \bN^p\bT)^{-1}\bN^p\bV^\top\bx$ is first computed via forward substitution and then scaled by $\bU$, i.e.,
\begin{align*}
\journalcompactmath{
	\underbrace{\begin{bmatrix}
			\bI                                \\
			-\bT_1 & \bI                       \\
			       & \ddots & \ddots           \\
			       &        & -\bT_{n-1} & \bI
		\end{bmatrix}}_{\bI - \bN^p\bT}
	\underbrace{\begin{bmatrix}\vh_1 \\  \vh_2 \\ \vdots \\ \vh_n \end{bmatrix}}_{\vh}
	\journallayout{}{}=
	\underbrace{\begin{bmatrix}
			\bzero                                        \\
			\bV_1^\top & \bzero                           \\
			           & \ddots & \ddots                  \\
			           &        & \bV_{n-1}^\top & \bzero
		\end{bmatrix}}_{\bN^p\bV^{\top}}
	\underbrace{\begin{bmatrix}  \bx_1 \\ \bx_2 \\ \vdots \\ \bx_n \end{bmatrix}}_{\bx}, \journallayout{\ \ }{\quad}
	\underbrace{\begin{bmatrix} \vo_1 \\ \vo_2 \\ \vdots \\ \vo_n \end{bmatrix}}_{\vo}
	\journallayout{}{}=
	\underbrace{
		\begin{bmatrix}
			\bU_1                     \\
			 & \bU_2                  \\
			 &       & \ddots         \\
			 &       &        & \bU_n
		\end{bmatrix}}_{\bU}\vh.
}
\end{align*}
An analogous recursion based on backward substitution yields matrix-vector products with the strictly block-upper-triangular part of $\bA$, namely $\by^U = \bP(\bN^q)^\top(\bI - \bN^q\bR)^{-\top}\bQ^\top\bx$. Thus, the matrix-vector multiplication with $\bA$ can be computed efficiently as $\by = \bA\bx = \by^L + \by^U + \bD\bx$.

A well-known result is that the triangular factors of an LDU factorization of an SSS matrix $\bA$ are also SSS with the same transition and output matrices, i.e., $\bA$ has an LDU factorization of the form \cite{eidelman2014a}
\begin{equation}
	\bA = \left[\bU (\bI - \bN^p\bT)^{-1}\bN^p\btV^\top + \bI\right]\btD\left[\bQ (\bI - \bN^q\bR)^{-1}\bN^q\btP^\top + \bI\right]^\top,
\end{equation}
where $\btV$, $\btD$, and $\btP$ are block-diagonal matrices. Similar results hold for LU, LDL$^\top$, and Cholesky factorizations; we derive the Cholesky factorization in \Cref{sub:sec:cholesky} and the LDL$^\top$ factorization in \Cref{app:sec:ldl}.
Another useful property of SSS matrices is that the class is closed under inversion; that is, the inverse of an SSS matrix is also SSS \cite{eidelman1999a}. This implies that the inverse of a lower triangular SSS matrix remains SSS (\Cref{sub:sec:explicit:inverse}). Specifically, for a nonsingular, lower-triangular SSS matrix $\bL=\bU (\bI - \bN^p\bT)^{-1}\bN^p\bV^\top + \bD$, it holds that
\begin{equation}
	\bL^{-1} = \boU (\bI - \bN^p\boT)^{-1}\bN^p\boV^\top + \bD^{-1},
\end{equation}
where the block-diagonal matrices $\boU, \boT$, and $\boV$ can be computed from $\bU, \bT$, $\bV$, and $\bD$ using \Cref{alg:explicit:inverse}, which is included in \Cref{app:additional:algorithms}.

\section{From SDE-induced kernels to sequentially semiseparable covariance matrices}\label{sec:sde-to-sss}

Recall that the process $\by(t)$ defined by the LTI SDE model in \eqref{eq:sde} is a zero-mean Gaussian process with kernel function
\[k(t,t') = \cov(\by(t),\by(t')) = \bC(t)\cov(\bx(t),\bx(t')) \bC(t')^\top.\]
We will now show that for any nondecreasing sequence $(t_1,\ldots,t_n)$, the symmetric matrix $\bK$ with blocks $\bK_{ij} = k(t_i,t_j)$ admits an SSS representation of the form \eqref{eq:sss:symmetric}.

Without loss of generality, we may assume that the state process $\bx(t)$ is defined for $t\geq t_0$ with initial condition $\bx(t_0)\sim \mathcal{N}(\bzero,\bP_0)$. For $t\geq t'\geq t_0$, the covariance between $\bx(t)$ and $\bx(t')$ then takes the form \cite{sarkka:sde}
\begin{align*}
	\cov(\bx(t),\bx(t')) = \eu^{\bF t}\bP_0\eu^{\bF^\top t'} + \int_{t_0}^{t'} \eu^{\bF (t-\tau)}\bG \bQ_{\bw} \bG^\top\eu^{\bF^\top (t'-\tau)}\, \dr\tau = \eu^{\bF (t-t')} \bP(t'),
\end{align*}
where
\begin{align*}
	\bP(t') = \cov(\bx(t'),\bx(t')) = \eu^{\bF t'}\bP_0\eu^{\bF^\top t'} + \int_{t_0}^{t'} \eu^{\bF (t'-\tau)}\bG \bQ_{\bw} \bG^\top\eu^{\bF^\top (t'-\tau)}\, \dr\tau.
\end{align*}
Thus, the kernel function $k(t,t') = \cov(\by(t),\by(t'))$ can be expressed as
\begin{align}\label{eq:sde:nonstationary:cov}
	k(t,t') = \cov(\bC(t)\bx(t),\bC(t')\bx(t')) = \begin{cases}
		                                              \bC(t)\eu^{\bF (t-t')} \bP(t')\bC(t')^\top,    & t \geq t', \\
		                                              \bC(t)\bP(t)\eu^{\bF^\top (t'-t)}\bC(t')^\top, & t < t'.
	                                              \end{cases}
\end{align}
This leads to the following result, connecting the SDE representation of the process $\by(t)$ to an SSS representation of the covariance matrix $\bK$.
\begin{theorem}[SSS representation of LTI-SDE-induced kernels]\label{thm:sde-to-sss}
    Let $\by(t)$ be a zero-mean Gaussian process defined by the LTI SDE model in \eqref{eq:sde} with kernel function $k(t,t') = \cov(\by(t),\by(t'))$. For any nondecreasing sequence $(t_1,\ldots,t_n)$, the covariance matrix $\bK$ with blocks $\bK_{ij} = k(t_i,t_j)$ admits an SSS representation of the form \eqref{eq:sss:symmetric} with generators
    \begin{align}\label{eq:sss:generators:nonstationary}
\journallayout{\begin{aligned}}{}
        \bU_i & = \bC(t_i), &
        \bT_i & = \eu^{\bF \Delta t_{i}}, \journallayout{\\}{&}
        \bV_i & = \bC(t_i) \bP(t_i) \bT_i^{\top}, &
        \bD_i & = \bC(t_i) \bP(t_i) \bC(t_i)^\top, \journallayout{\qquad}{&} i\in\{1,\ldots,n\}.
    \journallayout{\end{aligned}}{}
\end{align}
\end{theorem}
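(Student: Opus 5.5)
The plan is to verify \eqref{eq:sss:symmetric} block by block using the explicit covariance formula \eqref{eq:sde:nonstationary:cov}. Since $\bK$ is symmetric and $k(t,t')^\top = k(t',t)$, it suffices to check the diagonal blocks and the strictly lower blocks $i>j$. The upper blocks then follow by transposition, since $(\bU_j\bT_{j-1:i+1}\bV_i^\top)^\top = \bV_i(\bT_{j-1:i+1})^\top\bU_j^\top$.

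For the diagonal, take $t=t'=t_i$ in the first branch of \eqref{eq:sde:nonstationary:cov}. Then $\eu^{\bF\cdot 0}=\bI$, so $\bK_{ii} = \bC(t_i)\bP(t_i)\bC(t_i)^\top = \bD_i$.

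For $i>j$, the sequence is nondecreasing, so $t_i \ge t_j$ and the first branch applies: $\bK_{ij} = \bC(t_i)\eu^{\bF(t_i-t_j)}\bP(t_j)\bC(t_j)^\top$. The key step is to telescope the time difference, $t_i - t_j = \sum_{k=j}^{i-1}\Delta t_k$. The matrices $\eu^{\bF\Delta t_k}$ are all exponentials of scalar multiples of the same $\bF$, so they commute, and the semigroup property gives
\[
\eu^{\bF(t_i-t_j)} = \eu^{\bF\Delta t_{i-1}}\cdots\eu^{\bF\Delta t_{j}} = \bT_{i-1}\bT_{i-2}\cdots\bT_{j+1}\bT_j = \bT_{i-1:j+1}\bT_j,
\]
using the chain-product convention \eqref{eq:chain:product}. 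When $i=j+1$, $\bT_{j:j+1}=\bI$ and the product reduces to $\bT_j$, which is consistent. Substituting gives
\[
\bK_{ij} = \bC(t_i)\bT_{i-1:j+1}\bT_j\bP(t_j)\bC(t_j)^\top = \bU_i\bT_{i-1:j+1}\bigl(\bC(t_j)\bP(t_j)\bT_j^\top\bigr)^\top = \bU_i\bT_{i-1:j+1}\bV_j^\top,
\]
since $\bP(t_j)$ is symmetric, being a covariance matrix. The identity $\bP(t)^\top=\bP(t)$ is needed explicitly here, to move the trailing factor $\bT_j$ into $\bV_j$.

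Finally, I will note that the equivalence between the block expansion \eqref{eq:sss:symmetric} and the compact form $\bU(\bI-\bN^p\bT)^{-1}\bN^p\bV^\top+\cdots$ follows from the expansion \eqref{eq:blk:bidiag}. That expansion gives block $(i,j)$ of $(\bI-\bN^p\bT)^{-1}\bN^p$ as $\bT_{i-1:j+1}$ for $i>j$. I will also remark that $\bT_n=\eu^{\bF\cdot 0}=\bI$ and $\bV_n$ never enter $\bK$.

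I expect no real obstacle. The only points needing care are the index bookkeeping in the chain product, namely which $\bT$ is absorbed into $\bV_j$, and the handling of ties $t_i=t_j$ with $i\neq j$. Ties are harmless: $\Delta t_k=0$ gives $\bT_k=\bI$, and the first branch of \eqref{eq:sde:nonstationary:cov} still applies with $t=t'$.
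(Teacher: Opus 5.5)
Your proposal is correct and follows essentially the same route as the paper: you telescope $t_i-t_j=\sum_{k=j}^{i-1}\Delta t_k$ via the semigroup property of $\eu^{\bF\tau}$, identify the product with $\bT_{i-1:j+1}\bT_j$, and absorb the trailing $\bT_j$ into $\bV_j^\top$ using the symmetry of $\bP(t_j)$. You also spell out the diagonal case, the upper-triangular case via $k(t,t')^\top = k(t',t)$, and ties $t_i=t_j$, all of which the paper simply says follow similarly.
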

\begin{proof}
The SSS representation of $\bK$ follows directly from \eqref{eq:sde:nonstationary:cov} and the definition of the chain product in \eqref{eq:chain:product}, which implies that for any $\tau \in [t',t]$, we have $\eu^{\bF(t-t')} = \eu^{\bF (t-\tau)}\eu^{\bF(\tau - t')}$. Using this property, we find that for $i>j$,
\begin{align*}
  k(t_i,t_j) &= \bC(t_i)\eu^{\bF (t_i-t_j)} \bP(t_j)\bC(t_j)^\top\\
  &= \bC(t_i)\eu^{\bF \Delta t_{i-1}}\cdots \eu^{\bF \Delta t_{j+1}} \eu^{\bF \Delta t_{j}}\bP(t_j)\bC(t_j)^\top\\
  &= \bC(t_i)\bT_{i-1:j+1} \bT_j\bP(t_j)\bC(t_j)^\top.
  \end{align*}
  Comparing with \eqref{eq:sss:symmetric}, we see that $\bK_{ij} = \bU_i\bT_{i-1:j+1}\bV_j^\top$ for $i>j$. The cases $i=j$ and $i<j$ follow similarly.
\end{proof}

\begin{remark}
An analogous result holds when the sequence $(t_1,\ldots,t_n)$ is nonincreasing. Indeed, if $\bK$ has the form \eqref{eq:sss:symmetric} and $\bJ_n$ is the exchange (reversal) matrix of order $n$, then the symmetric block permutation $(\bJ_n \otimes \bI_d)\bK(\bJ_n \otimes \bI_d)$ preserves the SSS structure: $(\bJ_n \otimes \bI_d)\bK(\bJ_n \otimes \bI_d)$ is again symmetric and SSS with the representation
\[ (\bJ_n \otimes \bI_d) \bK (\bJ_n \otimes \bI_d) = \hat{\bV}(\bI - \bN^p\hat{\bT}^\top)^{-1}\bN^p\hat{\bU}^\top + \hat{\bU}(\bN^\top)^p(\bI -\bN^p\hat{\bT}^\top)^{-\top}\hat{\bV}^\top
+  \hat{\bD}, \]
where $\hat{\bU}$, $\hat{\bT}$, $\hat{\bV}$, and $\hat{\bD}$ denote the block diagonal matrices obtained from $\bU$, $\bT$, $\bV$, and $\bD$ by reversing the order of the blocks.
\end{remark}

Computing the matrix exponential is inherently challenging \cite{MoV:03}, and obtaining accurate SSS generators requires particular care. If a Jordan decomposition $\bF = \bUps\bJ\bUps^{-1}$ is available, we readily obtain an alternative SSS representation of $\bK$ with generators
\begin{align}\label{eq:sss:generators:nonstationary:jordan}
	\bU_i = \bC(t_i)\bUps, \
	\bT_i = \eu^{\bJ \Delta t_{i}}, \
	\bV_i = \bC(t_i)\bP(t_i)\bUps^{-\top} \bT_i^{\top}, \
	\bD_i = \bC(t_i)\bP(t_i)\bC(t_i)^\top,
\end{align}
for $i \in \{1,\ldots,n\}$. We note that kernel matrix $\bK$ is also generator-representable semiseparable, corresponding to the SSS generators
\begin{align*}
	\bU_i = \bC(t_i)\eu^{\bF t_i}, \quad
	\bT_i = \bI, \quad
	\bV_i = \bC(t_i)\bP(t_i) \eu^{-\bF^\top t_i}, \quad
	\bD_i = \bC(t_i)\bP(t_i)\bC(t_i)^\top,
\end{align*}
for $i \in \{1,\ldots,n\}$. However, this representation comes with some numerical caveats: the generators may diverge, and the problem of evaluating the products $\bU_i\bV_j^\top$ may be severely ill-conditioned, which can lead to numerical issues in finite precision arithmetic.

We proceed in three steps, considering examples of stationary kernels, nonstationary kernels, and transformations that preserve the SSS structure.

\subsection{Stationary kernels}
For a stationary process $\bx(t)$, the covariance between $\bx(t)$ and $\bx(t')$ takes the form
\begin{align}\label{eq:sde:stationary:cov}
	\cov(\bx(t),\bx(t')) = \begin{cases}
		                       \eu^{\bF(t-t')}\bP_\infty,      & t\geq t', \\
		                       \bP_\infty\eu^{\bF^\top(t'-t)}, & t < t',
	                       \end{cases}
\end{align}
where $\bP_\infty$ is the solution to the Lyapunov equation
\begin{align}\label{eq:sde:stationary:lyapunov}
\bF\bP_\infty + \bP_\infty\bF^\top + \bG\bQ_{\bw}\bG^\top = \bzero.
\end{align}
An SSS representation of the covariance matrix $\bK$ is then readily obtained from \Cref{thm:sde-to-sss} by substituting $\bP(t) = \bP_\infty$ into \eqref{eq:sss:generators:nonstationary}.

We now consider two examples of stationary kernels that admit LTI SDE representations, namely the Matérn and celerite kernels, and show how to obtain SSS representations of the corresponding covariance matrices.

\subsubsection*{The Matérn kernel}
The Matérn kernel is a stationary kernel that is parameterized by a smoothness parameter $\nu$ and a length scale parameter $\ell$. For half-integer values of $\nu$, i.e., $\nu = m+1/2$ for $m\in\integers_+$, the Matérn kernel becomes the product of an exponential and a polynomial of degree \(m\) \cite{stein1999a,AbS:65}, i.e.,
\begin{align}\label{eq:kernel:matern}
k(t,t') = \sigma^2 \eu^{-\alpha |t-t'|} \frac{m!}{(2m)!}\sum_{k=0}^{m} \frac{(m+k)!}{k!(m-k)!}(2\alpha |t-t'|)^{m-k},
\end{align}
where $\sigma^2$ is a variance parameter and $\alpha = \sqrt{2\nu}/\ell$. In this case, the kernel can be derived from the SDE \cite{hartikainen2010a}
\begin{equation}\label{eq:state:matern}
	\frac{\dr\bx(t)}{\dr t}
	= \underbrace{
		\begin{bmatrix}
			0    & 1      &          &      & \\
			     & \ddots & \ddots   &        \\
			     &        & 0        & 1      \\
			-c_0 & \cdots & -c_{m-1} & -c_m
		\end{bmatrix}}_{\bF}\bx(t) +
	\be_{m+1}w(t),\quad
	y(t) = \be_{1}^\top \bx(t),
\end{equation}
where $w(t)$ is a white-noise process with diffusion coefficient
\begin{align*}
	q_w = \sigma^2 (2\alpha)^{2m+1} \binom{2m}{m}^{-1},
\end{align*}
with $\alpha = \sqrt{2\nu}/\ell$, and $c_0,\ldots,c_m$ are the coefficient of the characteristic polynomial
\begin{align*}
	\det(\lambda\bI - \bF)
	\journallayout{}{(\alpha+\lambda)^{m+1}}
	= \sum_{k=0}^{m+1} \binom{m+1}{k} \alpha^{m+1-k} \lambda^k
	= c_0 + c_1 \lambda + \cdots + c_m \lambda^m + \lambda^{m+1}.
\end{align*}
Under the assumption that the state $\bx(t)$ is stationary, the state covariance is given by \eqref{eq:sde:stationary:cov}. In this particular case, the Lyapunov equation \eqref{eq:sde:stationary:lyapunov} can be reduced to a system of $m+1$ linear equations in $m+1$ unknowns, as outlined in \cref{app:computing:Pinf}. \Cref{tbl:matern_sss} includes $\bP_\infty$ for $m \in \{0,1,2\}$ as well as a matrix $\bUps$ whose columns comprise a Jordan chain of $\bF$ at $-\alpha$. Given $\bP_\infty$ and $\bUps$, we can directly use the SSS representation of $\bK$ in \eqref{eq:sss:generators:nonstationary:jordan}.
\begin{table}[htbp]
	\centering
	\setlength{\tabcolsep}{12pt}
	\begin{tabular}{cccc}
		\toprule
		$m$ & $\alpha$                & $\bP_\infty$                                                        & $\bUps$                                            \\
		\midrule
		$0$ & $\frac{1}{\ell}$        & $\sigma^2$                                                          & $1$                                                \\
		$1$ & $\frac{\sqrt{3}}{\ell}$ & $\sigma^2\begin{bmatrix} 1 & 0 \\ 0 &\alpha^2 \end{bmatrix}$        & $\begin{bmatrix} 1 & 0 \\ -\alpha &1\end{bmatrix}$ \\[12pt]
		$2$ & $\frac{\sqrt{5}}{\ell}$ & $\sigma^2 \begin{bmatrix}
				                                1                   & 0                  & -\frac{\alpha^2}{3} \\
				                                0                   & \frac{\alpha^2}{3} & 0                   \\
				                                -\frac{\alpha^2}{3} & 0                  & \alpha^4
			                                \end{bmatrix}$ & $\begin{bmatrix}1 & 0 & 0 \\ -\alpha & 1 & 0 \\\alpha^2 & -2\alpha & 1\end{bmatrix}$                   \\ \addlinespace
		\bottomrule
	\end{tabular}\vspace*{4pt}
	\caption{Stationary covariance $\bP_\infty$ for the Matérn process with parameter $\nu = m + 1/2$. The Jordan form of $\bF$ is given by $\bJ=\bUps^{-1}\bF\bUps = -\alpha \bI + \bN^\top$.}
	\label{tbl:matern_sss}
\end{table}

\subsubsection*{The celerite kernel}
Another example of a stationary kernel is the celerite kernel, proposed as a model of stellar variations in astronomical time series in \cite{celerite}. The same formulation was proposed as a model of displacements in structural dynamics in \cite{CRT:21}, where it is referred to as the ``single degree of freedom'' (SDOF) kernel.
The celerite kernel can be expressed as
\begin{align}\label{eq:kernel:celerite}
    k(t,t') = \sigma^2 \eu^{-\alpha |t-t'|} \left(\cosh(\omega |t-t'|) + \frac{\alpha}{\omega}\sinh(\omega|t-t'|)\right),
\end{align}
where $\alpha = \omega_0/(2Q)$ and $\omega = \sqrt{\alpha^2-\omega_0^2}$ for some $\omega_0>0$ and $Q>0$. The kernel can be derived from a stochastically driven simple harmonic oscillator,
\begin{align*}
	\left[ \frac{\dr^2}{\dr t^2} + \frac{\omega_0}{Q} \frac{\dr}{\dr t} + \omega_0^2  \right] y(t)
	= w(t),
\end{align*}
where $\omega_0$ is the undamped angular frequency, $Q$ is the quality factor, and $w(t)$ is a Gaussian white noise process with diffusion coefficient $q_w = 2\sigma^2\omega_0^3/Q$. An equivalent state-space representation is given by
\begin{align}\label{eq:sde:celerite}
	\frac{\dr \bx(t)}{\dr t}
	= \underbrace{\begin{bmatrix} 0 & 1 \\ -\omega_0^2 & -\omega_0/Q \end{bmatrix}}_{\bF} \bx(t) + \begin{bmatrix} 0 \\ 1 \end{bmatrix} w(t), \quad
	y(t) = \be_1^\top \bx(t),
\end{align}
where the eigenvalues of $\bF$ are $\lambda_{\pm} = -\alpha \pm \omega$. The stationary covariance $\bP_\infty$ is obtained by solving the Lyapunov equation in \eqref{eq:sde:stationary:lyapunov}, which yields
\[ \bP_{\infty} = \sigma^2 \begin{bmatrix} 1 & 0 \\ 0 & \omega_0^2 \end{bmatrix}.\]
The special case where $Q=1/2$ corresponds to the critically damped case, where the eigenvalues of $\bF$ are equal and real, and $\bF$ is defective and has a Jordan decomposition $\bF = \bUps \bJ \bUps^{-1}$ with
\begin{align*}
	\bJ = \begin{bmatrix} -\omega_0 & 1 \\ 0 & -\omega_0 \end{bmatrix}, \qquad \bUps = \begin{bmatrix} 1 & 0 \\ -\omega_0 & 1 \end{bmatrix}.
\end{align*}
This is equivalent to the Matérn-$3/2$ process ($m=1$) with $\alpha = \sqrt{3}/\ell = \omega_0$.
For other values of $Q$, the eigenvalues are either real, distinct, and negative ($Q<1/2$) or complex conjugates ($Q>1/2$), and $\bF$ is diagonalizable:
\begin{align*}
	\bF = \bUps \bLambda \bUps^{-1}, \quad
	\bLambda = \begin{bmatrix} \lambda_+ & 0 \\ 0 & \lambda_- \end{bmatrix}, \quad
	\bUps = \begin{bmatrix} 1 & 1 \\ \lambda_+ & \lambda_- \end{bmatrix}.
\end{align*}
This eigendecomposition can be used to obtain an SSS representation of $\bK$ via \eqref{eq:sss:generators:nonstationary:jordan} with $\bJ = \bLambda$, but care must be taken to avoid numerical issues when $Q$ is close to $1/2$, in which case $\bUps$ is nearly singular. An alternative is to express $\eu^{\bF \tau}$ as a polynomial $\eu^{\bF \tau} = a(\tau) \bI + b(\tau) \bF$ (see \cite{BeS:93,HoJ:11}), which yields
\begin{align}\label{eq:sss:celerite:expFtau}
    \eu^{\bF\tau} = \eu^{-\alpha\tau} \left( \cosh(\omega\tau) \bI + \frac{\sinh(\omega \tau)}{\omega} (\bF + \alpha \bI)\right).
\end{align}
We note that the algorithms developed in \cite{celerite}, which are based on the generator representation of semiseparable matrices, are not applicable when $Q=1/2$. The SSS representation of $\bK$ does not have this limitation.

\subsection{Nonstationary kernels}
We now consider an example of nonstationary kernel that admit LTI SDE representations and show how to obtain SSS representations of the corresponding covariance matrices.

\subsubsection*{The spline kernel}
The spline kernel of order $m \in \naturals$ can be expressed as \cite{Wah:90}
\begin{align}\label{eq:kernel:spline}
	k(t,t') = \int_{0}^{\min(t,t')} \frac{(t-u)^{m-1}}{(m-1)!} \frac{(t'-u)^{m-1}}{(m-1)!} \, \dr u.
\end{align}
It can be derived from the $(m-1)$-times integrated Wiener process, which may be characterized by the $m$th-order stochastic differential equation
\[ \frac{\dr^m y(t)}{\dr t^m} = w(t) \]
where $w(t)$ is a Gaussian white noise process with diffusion coefficient $q_w = \sigma^2$. A state-space representation is given by \cite{pillonetto2010a,pillonetto2014a,pillonetto2022a,andersen2020a}
\begin{equation}\label{eq:state:iwp}
	\frac{\dr\bx(t)}{\dr t}   = \bF \bx(t) +
	\be_{1}w(t),
	\quad
	y(t) = \be_{m}^\top \bx(t),
\end{equation}
where $\bF=\bN$ is the lower shift matrix of order $m$. Taking $y(t)$ and its first $m-1$ derivatives to be zero at $t_0=0$, corresponding to the initial state $\bx(0) = \bzero$, leads to the state covariance
\begin{align*}
	\bP(t) = \cov(\bx(t),\bx(t)) =  q_w \int_{0}^{t} \eu^{\bF (t - s)}\be_{1}  \be_{1}^\top \eu^{\bF^\top (t - s)} \dr s, \quad t \geq 0.
\end{align*}
Noting that
\[\be_{1}^\top \eu^{\bF^\top \tau} = \begin{bmatrix} \varphi_0(\tau) & \varphi_1(\tau) &\cdots & \varphi_{m-1}(\tau) \end{bmatrix}, \quad \text{where}\ \varphi_i(\tau) = \tau^i/i!,\ \ i \in \integers_+,\]
and
\[ \int_{0}^t \varphi_i(t-s)\varphi_j(t-s)\, \dr s = \frac{1}{i!\, j!} \frac{t^{i+j+1}}{i+j+1}, \quad i,j \in \integers_+,\]
we can express $\bP(t)$ as a diagonally-scaled Hankel matrix,
\begin{align}
	\bP(t) = \sigma^2
	\bQ
	\begin{bmatrix}
		t               & \frac{t^2}{2}       & \cdots & \frac{t^{m}}{m}       \\
		\frac{t^2}{2}   & \frac{t^3}{3}       & \cdots & \frac{t^{m+1}}{m+1}   \\
		\vdots          & \vdots              & \ddots & \vdots                \\
		\frac{t^{m}}{m} & \frac{t^{m+1}}{m+1} & \cdots & \frac{t^{2m-1}}{2m-1}
	\end{bmatrix}
	\bQ, \quad
	\bQ = \diag(0!,1!,\ldots,(m-1)!)^{-1}.
\end{align}
Noting that $\eu^{\bF \tau} = \sum_{i=0}^{m-1} \varphi_i(\tau) \bN^i$, we readily obtain an SSS representation of $\bK$ from \eqref{eq:sss:generators:nonstationary}. \Cref{tbl:spline_sss} provides expressions for $\bP(t)$ and $\eu^{\bF \tau}$ for $m \in \{1,2,3\}$.
\begin{table}[htbp]
	\centering
	\setlength{\tabcolsep}{12pt}
	\begin{tabular}{ccc}
		\toprule
		$m$ & $\bP(t)$                                                                                                     & $\eu^{\bF \tau}$                                                               \\
		\midrule
		$1$ & $\sigma^2 t$                                                                                                 & $1$                                                                            \\
		$2$ & $\sigma^2 \begin{bmatrix}t & t^2/2 \\ t^2/2 & t^3/3\end{bmatrix}$                                            & $\begin{bmatrix} 1 & 0 \\ \tau & 1\end{bmatrix}$                               \\
		\addlinespace
		$3$ & $\sigma^2 \begin{bmatrix} t & t^2/2 & t^3/6 \\ t^2/2 & t^3/3 & t^4/8 \\ t^3/6 & t^4/8 & t^5/20\end{bmatrix}$ & $\begin{bmatrix} 1 & 0 & 0\\ \tau & 1 & 0 \\ \tau^2/2 & \tau & 1\end{bmatrix}$ \\
		\addlinespace
		\bottomrule
	\end{tabular}\vspace*{4pt}
	\caption{Covariance $\bP(t)$ and $\eu^{\bF \tau}$ for the $m$th-order spline kernel.}
	\label{tbl:spline_sss}
\end{table}

\subsection{Output modulation, time warping, and sums}\label{sub:sec:transformations}

The SSS structure of the covariance matrix $\bK$ is preserved under certain transformations of the kernel function $k(t,t')$. We will discuss three basic operations that preserve sequential semiseparability: output modulation, time warping, and kernel sums. These operations allow us to construct new kernels from existing ones while maintaining the SSS property of the corresponding covariance matrices.

\subsubsection*{Output modulation}
Let $\psi \colon \reals \to \reals$ be a given function, and define $\tilde\by(t) = \psi(t) \by(t)$. Then
\[ \tilde k(t,t') = \cov(\tilde \by(t), \tilde \by(t')) = \psi(t)\bC(t)\cov(\bx(t),\bx(t'))\bC(t')^\top \psi(t').\]
Thus, output modulation may be viewed as simply replacing $\bC(t)$ by $\tilde \bC(t) = \psi(t)\bC(t)$ in \eqref{eq:sde}, and hence the SSS structure is preserved.

As a simple but important example, consider a stationary Matérn-$1/2$ process
\[\frac{\dr x(t)}{\dr t} = -\alpha x(t) + w(t), \quad y(t)=x(t),\]
with diffusion coefficient $q_w = 2\alpha \sigma^2$. Applying the output modulation $\psi(t)=e^{-\beta t}$ with $\beta > 0$ gives
\[\tilde y(t)=e^{-\beta t}x(t),\]
and hence
\begin{align}\label{eq:kernel:dc}
\tilde k(t,t')=\sigma^2 e^{-\beta(t+t')}e^{-\alpha |t-t'|}.\end{align}
This kernel function is known as the diagonal-correlated (DC) kernel in system identification \cite{COL:12,pillonetto2014a}.
The tuned-correlated (TC) kernel is obtained as the special case where $\beta=\alpha$, which yields
\begin{align}\label{eq:kernel:tc}
\tilde k(t,t')=\sigma^2 e^{-2\alpha \max(t,t')}.
\end{align}
Thus, both the DC and TC kernels inherit the SSS property directly from the output-modulated Matérn-$1/2$ process.

\subsubsection*{Time warp}
Let $\phi \colon \reals \to \reals$ be a monotone function, and define $\check \by(t) = \by(\phi(t))$ with kernel function
\begin{align}
	\check{k}(t,t') = \cov(\by(\phi(t)),\by(\phi(t'))) = k(\phi(t),\phi(t')).
\end{align}
The requirement that $\phi$ be monotone ensures that the kernel matrix $\bK$ with blocks $\bK_{ij} = \check{k}(t_i,t_j)$ is SSS. Without monotonicity, there exists a permutation function $\pi$ such that $(\phi(t_{\pi(1)}),\ldots,\phi(t_{\pi(n)}))$ is nondecreasing, which implies that the kernel matrix with blocks $\check k(t_{\pi(i)},t_{\pi(j)})$ is SSS.

The stable-spline family provide an example of time warping. Let $k_m$ denote the
$m$th-order spline kernel. Applying the monotone time warp $\phi(t)=e^{-\alpha t}$ with $\alpha>0$ gives
\begin{align}\label{eq:kernel:ss}
\check k_m(t,t') = k_m(e^{-\alpha t},e^{-\alpha t'}).
\end{align}
Since $\phi$ is monotone, the resulting kernel matrix remains SSS after ordering the data points. The corresponding hyperparameter derivatives can also be represented in SSS form; we return to this point in \Cref{sub:sec:analytical:derivatives}. We note that the family of first-order stable spline kernels is equivalent to the family of TC kernels.
\begin{remark}
We note that while the time warp $\phi(t)=e^{-\alpha t}$ is monotone, it maps a nondecreasing sequence of time points $t_1,\ldots,t_n$ to a nonincreasing sequence of time points $\phi(t_1),\ldots,\phi(t_n)$. The resulting kernel matrix is still SSS, but the SSS generators are defined in reverse order; see as remarked following \Cref{thm:sde-to-sss}.
\end{remark}

\subsubsection*{Kernel sums}
A sum of kernels of the form
\[ k(t,t') = \sum_{i=1}^m k_i(t,t') \]
leads to a kernel matrix of the form $\bK = \sum_{i=1}^m \bK_i$, where $\bK_i$ is the kernel matrix generated by $k_i$. The matrix $\bK$ is SSS if $\bK_1,\ldots,\bK_m$ are SSS. We may view $k(t,t')$ as the covariance of the sum of $m$ independent processes, i.e.,
\[ \by(t) = \sum_{i=1}^m \by_i(t), \qquad k(t,t') = \cov(\by(t),\by(t')),  \]
where $\by_i(t)$ is described by an SDE
\begin{align*}
	\frac{\dr \bx_i(t)}{\dr t} = \bF_i \bx_i(t) + \bG_i \bw_i(t), \quad \by_i(t) =  \bC_i(t) \bx_i(t), \qquad i \in \{1,\ldots,m\}.
\end{align*}
The process $\by(t)$ can then be described by the SDE
\begin{align*}
\journallayout{\begin{aligned}}{}
	\frac{\dr \bx(t)}{\dr t} \journallayout{&}{}= \blkdiag(\bF_1,\ldots,\bF_m) \bx(t) + \blkdiag(\bG_1,\ldots,\bG_m)\bw(t),\journallayout{\\}{\quad} \by(t) \journallayout{&}{}=  \begin{bmatrix} \bC_1(t) & \cdots & \bC_m(t)\end{bmatrix} \bx(t),
\journallayout{\end{aligned}}{}
\end{align*}
where $\bx(t) = (\bx_1(t),\ldots,\bx_m(t))$ and $\bw(t) = (\bw_1(t),\ldots,\bw_m(t))$. We note that the $m$ state-space representations need not have the same state dimension, and the resulting SSS representation of $\bK$ has state dimension equal to the sum of the state dimensions of the individual SDEs. The SSS generators of $\bK$ can be obtained by concatenating the SSS generators of $\bK_1,\ldots,\bK_m$ in block-diagonal and block-row form, respectively.

\section{Linear-time SSS algorithms}\label{sec:algorithms}
We will now derive recursive algorithms for the primitives listed in \Cref{tab:algorithms:overview}, all of which have computational complexity linear in the number of data points. The recursions for the Cholesky and LDL$^\top$ factorizations, as well as triangular solves, are adaptations of existing SSS factorization results to the block notation adopted in this work. The main new algorithmic contribution is a recursion for $\tr(\bK^{-1}\bA)$ that avoids forming intermediate SSS products, making it directly applicable to the evaluation of analytic log-marginal-likelihood gradients. We also derive an algorithm for computing the diagonal elements of $\bK^{-1} = \bL^{-\top}\bL^{-1}$. This procedure can be interpreted as a modification of the SSS multiplication algorithm of \cite{chandrasekaran2003b}, specialized to compute and store only the diagonal blocks of the product. Several algorithms are presented in both Cholesky and LDL$^\top$ forms. The LDL$^\top$ formulation, in particular, admits a natural interpretation in terms of Kalman filtering, where $\bL^{-1}\by$ corresponds to the innovation process and $\btD$ to its associated covariance (\Cref{app:kalman:marginal}). We will frequently make use of the identity
\begin{align}\label{eq:chain_identity}
	\bT_{i:j} = \bT_{i:k+1}\bT_{k}\bT_{k-1:j}, \quad i\geq k \geq j,
\end{align}
which is a direct consequence of the definition of the descending-index chain product in \eqref{eq:chain:product}.
\begin{table}[H]
	\centering
	\caption{SSS algorithms and their role in Gaussian process regression. The matrix $\bK$ is assumed to be symmetric positive definite and SSS, and $\bA$ is assumed to be square and SSS.}
	\label{tab:algorithms:overview}
	\begin{tabular}{@{}lll@{}}
		\toprule
		Algorithm & Primitive & Location\\
		\midrule
		\ref{alg:cholesky}: Cholesky factorization                            & $\bK = \bL\bL^\top$
		& \Cref{sec:algorithms} \\
		\ref{alg:diagonal:inverse}: Diagonal blocks of inverse             & $(\bK^{-1})_{ii}$  \\
		\ref{alg:trace}: Trace of $\bK^{-1}\bA$                               & $\tr(\bK^{-1}\bA)$ \\
		\midrule
		\ref{alg:posterior:variance}: Diagonal blocks of Schur complement & $(\bK_{**}-\bK_*(\bK+\bSigma)^{-1}\bK_*^\top)_{ii}$
		& \Cref{sec:gp:regression}\\
		\midrule
		\ref{alg:forward:sub}: Forward substitution                           & $\bx = \bL^{-1}\bb$
		& \Cref{app:additional:algorithms} \\
		\ref{alg:backward:sub}: Backward substitution                         & $\bx = \bL^{-\top}\bb$ \\
		\ref{alg:explicit:inverse}: Inverse Cholesky factor                   & SSS generators for $\bL^{-1}$  \\
		\midrule
		\ref{alg:ldl}: $\text{LDL}^\top$ factorization                        & $\bK = \bL\bD\bL^\top$
		& \Cref{app:sec:ldl} \\
		\bottomrule
	\end{tabular}
\end{table}

\subsection{Cholesky factorization}\label{sub:sec:cholesky}
Consider a symmetric positive definite SSS matrix $\bK \in \reals^{nd \times nd}$ of the form \eqref{eq:sss:symmetric}, i.e.,
\begin{align*}
	\bK = \bU (\bI - \bN^p\bT)^{-1}\bN^p\bV^\top + \bV(\bN^p)^\top (\bI - \bN^p\bT)^{-\top}\bU^{\top} + \bD.
\end{align*}
We will show that the Cholesky factorization $\bK = \bL\bL^\top$ yields an SSS Cholesky factor of the form
\begin{equation}\label{eq:L}
	\bL = \bU (\bI - \bN^p\bT)^{-1}\bN^p\btV^\top + \btD,
\end{equation}
where $\btV$ and $\btD$ are block-diagonal matrices, i.e.,
\begin{equation}
	\bL_{ik} =
	\begin{cases}
		\bU_i\bT_{i-1:k+1}\btV_k^\top, & i > k,            \\
		\btD_i,                        & i = k,            \\
		\bzero,                        & \text{otherwise}.
	\end{cases}
\end{equation}
For $i > j$, the block $\bK_{ij} = (\bL\bL^\top)_{ij}$ is a sum of $j$ terms, i.e.,
\begin{align*}
	(\bL\bL^\top)_{ij}
	 & = \sum_{k=1}^{j-1}\bL_{ik}(\bL_{jk})^\top + \bL_{ij}\btD_{j}^\top                                                             \\
	 & = \sum_{k=1}^{j-1}\bU_i\bT_{i-1:k+1}\btV_k^\top\btV_k\bT_{j-1:k+1}^\top\bU_j^\top + \bU_i\bT_{i-1:j+1}\btV_j^\top\btD_j^\top.
\end{align*}
Using the identity \eqref{eq:chain_identity}, this simplifies to
\begin{align*}
	(\bL\bL^\top)_{ij} & = \bU_i\bT_{i-1:j+1}\left(\bT_j\left(\sum_{k=1}^{j-1}\bT_{j-1:k+1}\btV_k^\top\btV_k\bT_{j-1:k+1}^\top\right)\bU_j^\top + \btV_j^\top\btD_j^\top\right) \\
	 & = \bU_i\bT_{i-1:j+1}\left(\bT_j\bm{\Psi}_j\bU_j^\top + \btV_j^\top\btD_j^\top\right),
\end{align*}
where $\bm{\Psi}_j = \sum_{k=1}^{j-1}\bT_{j-1:k+1}\btV_k^\top\btV_k\bT_{j-1:k+1}^\top$ is defined recursively by
\begin{align}\label{eq:cholesky:psi:recursion}
	\bm{\Psi}_1 = \bzero, \qquad \bm{\Psi}_{i+1} = \bT_{i}\bm{\Psi}_{i}\bT_{i}^\top + \btV_{i}^\top\btV_{i},  \quad i = 1,\ldots,n-1.
\end{align}
Matching $(\bL\bL^\top)_{ij}$ with $\bK_{ij} = \bU_i\bT_{i-1:j+1}\bV_j^\top$ yields $\bV_j^\top = \bT_j\bm{\Psi}_j\bU_j^\top + \btV_j^\top\btD_j^\top$, which implies that
\begin{align}\label{eq:offdiag}
	\btV_j^\top = (\bV_j^\top  -\bT_j\bm{\Psi}_j\bU_j^\top)\btD_j^{-\top}.
\end{align}
For the diagonal blocks $\bK_{ii} = (\bL\bL^\top)_{ii}$, we have
\begin{align*}
	(\bL\bL^\top)_{ii}
	= \sum_{k=1}^{i-1}\bL_{ik}(\bL_{ik})^\top + \btD_i\btD_{i}^\top
	= \bU_i\bm{\Psi}_i\bU_i^\top + \btD_i\btD_i^\top,
\end{align*}
and matching $(\bL\bL^\top)_{ii}$ with $\bK_{ii}=\bD_i$ yields $\bU_i\bm{\Psi}_i\bU_i^\top + \btD_i\btD_i^\top = \bD_i$, which can be rearranged as
\begin{equation}\label{eq:diag}
	\btD_i\btD_i^\top = \bD_i - \bU_i\bm{\Psi}_i\bU_i^\top,
\end{equation}
where we take $\btD_i$ to be the lower triangular Cholesky factor of the right-hand side.
The relations \eqref{eq:cholesky:psi:recursion}, \eqref{eq:offdiag}, and \eqref{eq:diag} form the basis of the SSS Cholesky factorization presented in \Cref{alg:cholesky}, which has computational complexity $\cO(n(d^3+p^3))$.

\begin{algorithm}[H]
	\caption{Cholesky factorization of symmetric positive definite SSS matrix $\bK$}
	\label{alg:cholesky}
	\KwIn{SSS matrix $\bK$ of the form \eqref{eq:sss:symmetric}, defined by block-diagonal matrices $(\bU,\bT,\bV,\bD)$.}
	\KwOut{Block-diagonal matrices $\btV$ and $\btD$ such that $\bL=\bU(\bI - \bN^p\bT)^{-1}\bN^p\btV^\top + \btD$ and $\bK = \bL\bL^\top$.}
	$\bm{\Psi} \gets \bzero_{p \times p}$\;
	\For{$i = 1$ \KwTo $n$}{
		$\bM \gets \bT_i\bm{\Psi}$\;
		$\btV_i^\top \gets \bV_i^\top - \bM\bU_i^\top$\;
		$\btD_i \gets \chol(\bD_i - \bU_i\bm{\Psi}\bU_i^\top)$ \tcp*{Lower triangular Cholesky factor}
		$\btV_i^\top \gets \btV_i^\top\btD_i^{-\top}$\;
		$\bm{\Psi} \gets \bM\bT_i^\top + \btV_i^\top\btV_i$\;
	}
	\Return{$(\btV_1,\ldots,\btV_n)$, $(\btD_1,\ldots,\btD_n)$}
\end{algorithm}
\begin{remark}
	An SSS representation of $\bL^{-1}$ can be computed in linear time using \Cref{alg:explicit:inverse}; the derivation and algorithm are provided in \Cref{app:additional:algorithms}. When $\bC_1,\ldots,\bC_n$ are square and nonsingular, the inverse of $\bL$ is block bidiagonal, and $\bK^{-1}$ is therefore block tridiagonal. This is shown in \Cref{app:block:tridiagonal}.
\end{remark}

\subsection{Diagonal blocks of $\bK^{-1}$}\label{sub:sec:diagonal:inverse}

The diagonal blocks of $\bK^{-1}$ can, in principle, be obtained directly from the product $\bL^{-\top}\bL^{-1}$. In fact, the algorithm that we present may be viewed as a special case of the general multiplication of two SSS matrices algorithm from \cite{chandrasekaran2003b}. Given an SSS representation of $\bL^{-1}$ of the form
\begin{align}\label{eq:sss:inverse:cholesky}
	\bL^{-1} = \boU(\bI - \bN\boT)^{-1}\bN\boV^\top + \boD,
\end{align}
which can be computed from the SSS representation of $\bL$ using \Cref{alg:explicit:inverse}, we can express the diagonal blocks of $\bK^{-1}$ as
\begin{align}
	\notag
	(\bK^{-1})_{ii}
	 & = \sum_{k=i+1}^{n}(\bL^{-1})_{ki}^\top(\bL^{-1})_{ki} + \boD_i^\top\boD_i                                 \\
	\notag
	 & = \sum_{k=i+1}^{n}\boV_i\boT_{k-1:i+1}^\top\boU_k^\top\boU_k\boT_{k-1:i+1}\boV_i^\top + \boD_i^\top\boD_i \\
	\label{eq:inverse:diagonal}
	 & = \boV_i\bm{\Phi}_i\boV_i^\top + \boD_i^\top\boD_i,
\end{align}
where $\bm{\Phi}_i = \sum_{k=i+1}^{n}\boT_{k-1:i+1}^\top\boU_k^\top\boU_k\boT_{k-1:i+1}$ can be computed recursively as
\begin{align}\label{eq:inverse:diagonal:recursion}
	\bm{\Phi}_n = \bzero, \qquad \bm{\Phi}_{i-1} = \boT_i^\top\bm{\Phi}_i\boT_i + \boU_i^\top\boU_i, \quad i = n,\ldots,2.
\end{align}
The resulting procedure for computing the diagonal blocks of $\bK^{-1}$ is presented in \Cref{alg:diagonal:inverse}, which has computational complexity $\cO(n(d^3+p^3))$.

\begin{algorithm}[H]
	\caption{Diagonal blocks of inverse of symmetric positive definite SSS matrix $\bK$}
	\label{alg:diagonal:inverse}
	\KwIn{Inverse Cholesky factor $\bL^{-1}$ of $\bK$, defined by block-diagonal matrices $(\boU,\boT,\boV,\boD)$.}
	\KwOut{$\bm{\Omega}_i = (\bK^{-1})_{ii}$ for $i\in\{1, \ldots, n\}$.}
	$\bm{\Phi} \gets \bzero_{p \times p}$\;
	\For{$i = n$ \KwTo $1$}{
		$\bm{\Omega}_i   \gets \boV_i\bm{\Phi}\boV_i^\top + \boD_i^\top\boD_i$\;
		$\bm{\Phi}       \gets \boT_i^\top\bm{\Phi}\boT_i + \boU_i^\top\boU_i$\;
	}
	\Return{$(\bm{\Omega}_1,\ldots,\bm{\Omega}_n)$}
\end{algorithm}

\subsection{Trace of $\bK^{-1}\bA$}\label{sub:sec:trace}

We now derive a novel algorithm for computing the trace of $\bK^{-1}\bA$, where $\bK$ is once again assumed to be a symmetric positive definite SSS matrix of the form \eqref{eq:sss:symmetric}, and $\bA$ is a general nonsymmetric SSS matrix of the form \eqref{eq:def:nonsym:A}, i.e.,
\begin{align*}
	\bA = \bbU (\bI - \bN^p\bbT)^{-1}\bN^p\bbV^\top + \bbP(\bN^q)^\top (\bI - \bN^q\bbR)^{-\top}\bbQ^\top + \bbD,
\end{align*}
where we use the accented letters $\bbT$, $\bbU$, $\bbV$, $\bbP$, $\bbQ$, $\bbR$, and $\bbD$ for the SSS generators of $\bA$ to distinguish them from the SSS generators of $\bK$. Furthermore, we will assume that the inverse Cholesky factor of $\bK$ has an SSS representation of the form \eqref{eq:sss:inverse:cholesky}, i.e.,
\[\bL^{-1} = \boU(\bI - \bN^p\boT)^{-1}\bN^p\boV^\top + \boD.\]
Using the cyclic property of the trace, we then have
\begin{align*}
	\tr(\bK^{-1}\bA) = \tr(\bL^{-1}\bA\bL^{-\top}) = \sum_{i=1}^{n}\tr(\bm{\Theta}_i),
\end{align*}
where $\bm{\Theta}_i = (\bL^{-1}\bA\bL^{-\top})_{ii}$ denotes the $i$th diagonal block of $\bL^{-1}\bA\bL^{-\top}$. We note that $\bm{\Theta}_i$ is determined entirely by the $i$th block row of $\bL^{-1}$ and the $id \times id$ leading principal submatrices of $\bA$, which we will denote $\bA_{(i)}$, i.e., $\bm{\Theta}_1 = \boD_1 \bbD_1\boD_1^{\top}$ and for $i \in \{2,\ldots,n\}$, we have that $\bm{\Theta}_i$ is given by
\journallayout{\begingroup\setlength{\arraycolsep}{3pt}}{} %
\begin{align*}
\underbrace{	\begin{bmatrix}
						\boV_1 \boT_{i-1:2}^\top \boU_i^\top \\
						\boV_2 \boT_{i-1:3}^\top \boU_i^\top \\
						\vdots                               \\
						\boD_{i}^{\top}
					\end{bmatrix}^\top}_{i \text{th block row of } \bL^{-1}}
\underbrace{
	\begin{bmatrix}
		\bbD_1                        & \bbP_1\bbQ_2^\top             & \cdots & \bbP_1\bbR_{i-1:2}^\top\bbQ_i^\top \\
		\bbU_2\bbV_1^\top             & \bbD_2                        & \cdots & \bbP_2\bbR_{i-1:3}^\top\bbQ_i^\top \\
		\vdots                        & \vdots                        & \ddots & \vdots                             \\
		\bbU_i\bbT_{i-1:2}\bbV_1^\top & \bbU_i\bbT_{i-1:3}\bbV_2^\top & \cdots & \bbD_i
	\end{bmatrix}}_{\bA_{(i)}}
\begin{bmatrix}
	\boV_1 \boT_{i-1:2}^\top \boU_i^\top \\
	\boV_2 \boT_{i-1:3}^\top \boU_i^\top \\
	\vdots                               \\
	\boD_{i}^{\top}
\end{bmatrix}.
\end{align*}
\journallayout{\endgroup}{}
To simplify notation, we define
\begin{align*}
	\bm{\Upsilon}_i                =
	\begin{bmatrix}
		\boV_1 \boT_{i-1:2}^\top \\
		\boV_2 \boT_{i-1:3}^\top \\
		\vdots                   \\
		\boV_{i-1}
	\end{bmatrix},  \journallayout{\quad}{\qquad}
	\bm{\Xi}_i                     =
	\begin{bmatrix}
		\bbV_1  \bbT_{i-1:2}^\top \\
		\bbV_2  \bbT_{i-1:3}^\top \\
		\vdots                    \\
		\bbV_{i-1}
	\end{bmatrix}, \journallayout{\quad}{\qquad}
	\bm{\Gamma}_i                  =
	\begin{bmatrix}
		\bbP_1\bbR_{i-1:2}^\top \\
		\bbP_2\bbR_{i-1:3}^\top \\
		\vdots                  \\
		\bbP_{i-1}
	\end{bmatrix}, \journallayout{\quad}{\qquad} i \in \{2,\ldots,n\},
\end{align*}
such that for $i \in \{2,\ldots,n\}$, we can write
\begin{align*}
	\bm{\Theta}_i & = \begin{bmatrix}\bm{\Upsilon}_i\boU_i^\top \\ \boD_i^{\top}\end{bmatrix}^\top
	\begin{bmatrix}\bA_{(i-1)} & \bm{\Gamma}_i\bbQ_i^\top \\ \bbU_i\bm{\Xi}_i^\top & \bbD_i\end{bmatrix}
	\begin{bmatrix}\bm{\Upsilon}_i\boU_i^\top \\ \boD_i^{\top}\end{bmatrix}                                                                                                                                                                                                                                                                                  \\
	              & = \underbrace{\boD_i\bbD_i\boD_i^{\top}}_{\text{(I)}} + \underbrace{\boD_i\bbU_i\bm{\Xi}_i^\top \bm{\Upsilon}_i \boU_i^\top}_{\text{(II)}} + \underbrace{\boU_i \bm{\Upsilon}_i^\top\bm{\Gamma}_i\bbQ_i^\top \boD_i^{\top}}_{\text{(III)}} + \underbrace{\boU_i\bm{\Upsilon}_i^\top\bA_{(i-1)}\bm{\Upsilon}_i\boU_i^\top}_{\text{(IV)}}.
\end{align*}
The first term can be computed directly, whereas the remaining three terms can be evaluated recursively. We now demonstrate this, beginning with the observation that $\bm{\Upsilon}_2 = \boV_1$, $\bm{\Xi}_2 = \bbV_1$, and $\bm{\Gamma}_2 = \bbP_1$, and
\begin{align*}
	\bm{\Upsilon}_{i+1} = \begin{bmatrix}\bm{\Upsilon}_i\boT_i^\top \\ \boV_i\end{bmatrix}, \journallayout{\quad}{\qquad}
	\bm{\Xi}_{i+1} = \begin{bmatrix}\bm{\Xi}_i\bbT_i^\top \\ \bbV_i\end{bmatrix},			\journallayout{\quad}{\qquad}
	\bm{\Gamma}_{i+1} = \begin{bmatrix}\bm{\Gamma}_i\bbR_i^\top \\ \bbP_i\end{bmatrix}, 	\journallayout{\quad}{\qquad} i \in \{2,\ldots,n-1\}.
\end{align*}
Using these recurrences, and defining $\bW_i = \bm{\Xi}_i^\top\bm{\Upsilon}_i$ and $\bZ_i = \bm{\Upsilon}_i^\top\bm{\Gamma}_i$, we obtain the recurrences
\begin{align}\label{eq:trace:recurrences}
	\bW_{i+1} = \bbT_i \bW_i\boT_i^\top + \bbV_i^\top\boV_i, \qquad
	\bZ_{i+1} = \boT_i\bZ_i\bbR_i^\top + \boV_i^\top\bbP_i,
\end{align}
for $i \in \{1,\ldots,n-1\}$, with $\bW_1 = \bzero$ and $\bZ_1=\bzero$. Similarly, in term (IV), the subexpression $\bm{\Pi}_i = \bm{\Upsilon}_i^\top\bA_{(i-1)}\bm{\Upsilon}_i$ satisfies the recurrence relation
\begin{equation}\label{eq:trace:recurrence:IV}
	\begin{aligned}
		\bm{\Pi}_{i+1} & = \begin{bmatrix}\bm{\Upsilon}_i\boT_i^\top \\ \boV_i\end{bmatrix}^\top
		\begin{bmatrix}\bA_{(i-1)} & \bm{\Gamma}_i\bbQ_i^\top \\ \bbU_i\bm{\Xi}_i^\top & \bbD_i\end{bmatrix}
		\begin{bmatrix}\bm{\Upsilon}_i\boT_i^\top \\ \boV_i\end{bmatrix}                                                                               \\
		               & =\boT_i \bm{\Pi}_i\boT_i^\top + \boV_i^\top\bbU_i\bW_i\boT_i^\top + \boT_i \bZ_i\bbQ_i^\top\boV_i  + \boV_i^\top\bbD_i\boV_i,
	\end{aligned}
\end{equation}
for $i \in \{1,\ldots,n-1\}$ where $\bm{\Pi}_1 = \bzero$. \Cref{alg:trace} computes $\tr(\bK^{-1}\bA)$ using the recurrence relations \eqref{eq:trace:recurrences} and \eqref{eq:trace:recurrence:IV}. We note that \Cref{alg:trace} computes the diagonal blocks of $\bL^{-1}\bA \bL^{-\top}$, which can be returned instead of the trace if desired. In either case, the algorithm has computational complexity $\cO(n(d^3+p^3))$.

\begin{algorithm}[H]
	\caption{Trace of $\bK^{-1}\bA$ where $\bK$ and $\bA$ are SSS matrices, and $\bK$ is symmetric positive definite}
	\label{alg:trace}
	\KwIn{SSS matrix $\bA$ of the form \eqref{eq:def:nonsym:A}, defined by $(\bbU, \bbT, \bbV, \bbD, \bbP, \bbR, \bbQ)$, and inverse Cholesky factor of $\bK$ of the form \eqref{eq:sss:inverse:cholesky}, defined by $(\boU, \boT, \boV, \boD)$.}
	\KwOut{$\tr(\bK^{-1}\bA)$}
	$\bm{\Pi} \gets \bzero$, $\bZ \gets \bzero$, $\bW \gets \bzero$\;
	\For{$i = 1$ \KwTo $n$}{
		$\bm{\Theta}_i \gets \boD_i\bbD_i\boD_i^\top + \boD_i\bbU_i\bW\boU_i^\top + \boU_i\bZ\bbQ_i^\top\boD_i^\top  + \boU_i\bm{\Pi}\boU_i^\top$\;
		$\bm{\Pi} \gets \boT_i\bm{\Pi}\boT_i^\top + \boV_i^\top \bbU_i\bW\boT_i^\top +  \boT_i\bZ\bbQ_i^\top\boV_i+ \boV_i^\top\bbD_i\boV_i$\;
		$\bW \gets \bbT_i\bW\boT_i^\top + \bbV_i^\top\boV_i$\;
		$\bZ \gets \boT_i\bZ\bbR_i^\top + \boV_i^\top\bbP_i$\;
	}
	\Return{$\sum_{i=1}^n \tr(\bm{\Theta}_i)$}
\end{algorithm}

\section{Gaussian process regression}\label{sec:gp:regression}

The algorithms presented in \Cref{sec:algorithms} and \Cref{app:additional:algorithms} can be used to accelerate GP regression with kernels that admit a sequentially semiseparable representation. \Cref{tab:algorithms:overview} \journallayout{summarizes these algorithms.}{summarizes these algorithms and their roles as GP primitives.} We will assume that $f \colon \reals \to \reals^d$ is a vector-valued function that we wish to learn from training data $\cD = \{(t_i, \by_i)\}_{i=1}^n$ where $\by_i$ are observations of $\bff_i = f(t_i)$, generated according to the model
\begin{equation}
	\by_i = \bff_i + \bm{\epsilon}_i, \qquad \bm{\epsilon}_i \sim \cN(\bzero, \bm{\Sigma}_i), \qquad i \in \{1,\ldots,n \},
\end{equation}
where we assume that $\bm{\epsilon}_1,\ldots,\bm{\epsilon}_n$ are independent.

The GP regression model makes the a priori assumption that the joint distribution of $\bff_1,\ldots,\bff_n$ is Gaussian with mean zero and covariance $\bK$, which is a symmetric matrix of order $nd$ with $d \times d$ blocks given by
\[ \bK_{ij} = k(t_i,t_j; \btheta), \qquad i,j \in \{1,\ldots,n\}, \]
where $k \colon \reals \times \reals \to \reals^{d\times d}$ is a positive definite kernel function parameterized by a vector of hyperparameters $\btheta\in \reals^{\nhyper }$. Here, we limit our attention to kernels that admit a sequentially semiseparable representation, which implies that $\bK$ is a positive semidefinite SSS matrix.

Given a nondecreasing sequence of $\ntest$ test inputs $\bt_* = (t_{*1},\ldots,t_{*\ntest})$, the joint prior distribution of $\by = (\by_1, \dots, \by_n)$ and $\bff_* = (f(t_{*1}),\ldots,f(t_{*\ntest}))$ is also Gaussian, i.e.,
\begin{equation}\label{eq:joint:distribution}
	\begin{bmatrix}
		\by \\ \bff_*
	\end{bmatrix}
	\sim
	\cN\left(
	\bzero,
	\begin{bmatrix}
		\bK + \bm{\Sigma} & \bK_*^\top \\
		\bK_*             & \bK_{**}
	\end{bmatrix}
	\right),
\end{equation}
where $\bm{\Sigma} = \blkdiag(\bm{\Sigma}_1,\ldots,\bm{\Sigma}_n)$ and where the $(i,j)$th blocks of $\bK$, $\bK_{*}$, and $\bK_{**}$ are given by
\begin{align*}
	\bK_{ij}=k(t_i,t_j;\btheta), \quad
	                             [\bK_{*}]_{ij}=k(t_{*i},t_j;\btheta), \quad
	                                                                   [\bK_{**}]_{ij}=k(t_{*i},t_{*j};\btheta).
\end{align*}
The posterior distribution of $\bff_*$ given $\by$, which is also commonly known as the posterior predictive distribution, is given by
\begin{align}\label{eq:gp:posterior}
	\bff_*\,|\,\by \sim \cN(\bm{\mu}_{\bff_*|\by}, \bm{\Sigma}_{\bff_*|\by}),
\end{align}
where $\bm{\mu}_{\bff_*|\by} = \bK_* (\bK + \bm{\Sigma})^{-1} \by$ and $\bm{\Sigma}_{\bff_*|\by} = \bK_{**} - \bK_* (\bK + \bm{\Sigma})^{-1} \bK_*^\top$.

To fit the GP model to the training data, we need to estimate the hyperparameters $\btheta$ of the kernel function.
A common approach is to maximize the log marginal likelihood, $\log p(\by\,|\,\btheta)$, which can be expressed as the optimization problem
\begin{equation}\label{eq:gp:loglik}
	\hat \btheta= \argmax_{\btheta\in \Theta} \,\left( -\tfrac{1}{2}\by^\top (\bK + \bm{\Sigma})^{-1} \by - \tfrac{1}{2}\log\det(\bK + \bm{\Sigma}) - \tfrac{nd}{2}\log(2\pi) \right),
\end{equation}
where $\bTheta\subset \reals^{\nhyper }$ denotes the set of admissible hyperparameters. A local gradient-based optimization method, such as L-BFGS-B \cite{BLNZ:95}, is commonly used when $\bTheta$ is defined by simple lower and upper bounds on the hyperparameters. The gradient evaluation amounts to computing the partial derivatives of the log marginal likelihood with respect to each hyperparameter $\theta_k$, which can be computed analytically as
\begin{equation}\label{eq:gp:grad}
\journallayout{\begin{aligned}}{}
	\frac{\partial}{\partial\theta_k}\log p(\by\,|\,\btheta) \journallayout{&}{}= \tfrac{1}{2}\by^\top (\bK+\bm{\Sigma})^{-1}\frac{\partial (\bK+\bm{\Sigma})}{\partial\theta_k}(\bK+\bm{\Sigma})^{-1}\by \journallayout{\\ &\quad}{}- \tfrac{1}{2}\tr\!\left((\bK+\bm{\Sigma})^{-1}\frac{\partial(\bK+\bm{\Sigma})}{\partial\theta_k}\right).
\journallayout{\end{aligned}}{}
\end{equation}
For a general kernel, the computational cost of forming and factorizing $\bK+\bm{\Sigma}$ scales as $\cO(n^3 d^3)$, with a storage requirement of $\cO(n^2d^2)$. Once the Cholesky factorization $\bK+\bm{\Sigma} = \bL\bL^\top$ has been computed for a given $\btheta$, evaluation of the posterior mean and the diagonal of the posterior covariance scales as $\cO(n^2d^2+\ntest n d^2 )$ and $\cO(\ntest n^2 d^3)$, respectively. Similarly, given $\bL$, the evaluation of the log marginal likelihood scales as $\cO(n^2d^2)$ and evaluation of its gradient scales as $\cO(\nhyper n^3 d^3)$. In contrast, when $\bK+\bm{\Sigma}$ admits an SSS representation, storage and computational cost scale linearly with the number $n$ of input locations for fixed block size $d$ and SSS orders. Given an SSS representation of $\bK + \bm{\Sigma}$, the first step is to compute the Cholesky factorization $\bK+\bm{\Sigma}=\bL\bL^\top$, which can be computed using \Cref{alg:cholesky}. This, in turn, makes it possible to evaluate the terms $(\bK+\bm{\Sigma})^{-1} \by$ and $\log\det(\bK+\bm{\Sigma})$ in linear time for $d$ and $p$ fixed.
In the remainder of this section, we focus on the log marginal likelihood and describe how to efficiently compute its partial derivatives and how to efficiently evaluate and draw samples from the posterior distribution.

\begin{remark}
    An alternative to maximum likelihood estimation for hyperparameter selection is generalized cross validation (GCV) \cite{GHW:79}, which does not require a probabilistic noise model. As shown in \Cref{app:gcv}, the GCV score and its gradient can likewise be evaluated in linear time using the SSS representation of $\bK+\bm{\Sigma}$.
\end{remark}

\subsection{Partial derivatives of the log marginal likelihood}\label{sub:sec:analytical:derivatives}

In practice, the gradient of the log marginal likelihood is often approximated numerically using finite differences.
Forward finite differences require $\nhyper+1$ likelihood evaluations, while central differences require $2\nhyper +1$, and hence the computational cost of this gradient approximation scales as $\cO(\nhyper n(d^3 +p^3))$ when the SSS structure is exploited. Another approach is to use reverse-mode derivative recursions \cite{celerite,dfm2018} or, as we describe next, to compute the partial derivatives of the log marginal likelihood analytically by exploiting the SSS structure of $\bK+\bm{\Sigma}$. Specifically, we will consider three cases for how the hyperparameter $\theta_k$ enters the GP model:
\begin{enumerate}
	\item Only the output model $\Mo(t; \btheta)$ depends on $\theta_k$.
	\item Only the state $\bx(t;\btheta)$ depends on $\theta_k$.
	\item Time warping: $\check{\by}(t) = \by(\phi(t;\theta_k))$, where $\phi(t;\theta_k)$ is a time warp function.
\end{enumerate}
In case 1, we will show that the partial derivative of $\bK$ with respect to $\theta_k$ can be expressed as
\begin{align}\label{eq:gp:partial_deriv:general}
	\frac{\partial \bK}{\partial \theta_k} = \bA + \bA^\top,
\end{align}
where $\bA$ is a nonsymmetric SSS matrix. This leads to the following expression for the partial derivative of the log marginal likelihood with respect to $\theta_k$:
\begin{align*}
	\frac{\partial }{\partial \theta_k} \log p(\by\,|\, \btheta)
	 & = \by^\top (\bK+\bm{\Sigma})^{-1} \bA  (\bK+\bm{\Sigma})^{-1}\by - \tr\left((\bK+\bm{\Sigma})^{-1}\bA\right)                                \\
	 & \qquad +\tfrac{1}{2}\by^\top (\bK+\bm{\Sigma})^{-1} \frac{\partial \bm{\Sigma}}{\partial \theta_k}(\bK+\bm{\Sigma})^{-1}\by  - \tfrac{1}{2}
	\tr\left((\bK+\bm{\Sigma})^{-1}\frac{\partial \bm{\Sigma}}{\partial \theta_k} \right).
\end{align*}
Since matrix-vector products with $\bA$ are straightforward to compute and the term $\tr((\bK + \bm{\Sigma})^{-1}\bA)$ can be evaluated using \Cref{alg:trace}, the first two terms can be computed in linear time. Similarly, using \Cref{alg:diagonal:inverse} to compute the diagonal blocks of $(\bK+\bm{\Sigma})^{-1}$ and noting that $\partial \bm{\Sigma}/\partial \theta_k$ is block diagonal, the last two terms can also be computed in linear time. The full gradient of the log marginal likelihood can therefore be computed in $\cO(\nhyper  n (d^3 + p^3))$ time, which matches the cost of finite difference approximations.

\subsubsection*{Case 1: only the output model depends on the hyperparameter}
We first consider the case where only the output model $\Mo(t;\btheta)$ depends on the hyperparameter $\theta_k$. This is the case, for example, when the hyperparameter $\theta_k$ is an output modulation parameter, such as in the DC kernel; see \Cref{sub:sec:transformations}. Applying the product rule, we have
\begin{align*}
	\frac{\partial k(t,t')}{\partial \theta_k} =
	\begin{cases}
		\frac{\partial \bC(t;\btheta)}{\partial \theta_k}\eu^{\bF (t-t')} \bP(t')\bC(t;\btheta)^\top + \bC(t;\btheta)\eu^{\bF (t-t')} \bP(t')\frac{\partial\bC(t';\btheta)^\top}{\partial\theta_k },    & t \geq t', \\
		\frac{\partial\bC(t;\btheta)}{\partial\theta_k}\bP(t)\eu^{\bF^\top (t'-t)}\bC(t';\btheta)^\top + \bC(t;\btheta)\bP(t)\eu^{\bF^\top (t'-t)}\frac{\partial\bC(t';\btheta)^\top}{\partial\theta_k }, & t < t'.
	\end{cases}
\end{align*}
Evaluating this expression at training-input pairs $(t_i, t_j)$ yields a decomposition of the form \eqref{eq:gp:partial_deriv:general} with the SSS representation of $\bA$ being
\begin{align*}
	 & \bU_i = \frac{\partial \Mo(t_i;\btheta)}{\partial \theta_k}, \quad
	\bT_i = \eu^{\bF\Delta t_i}, \quad
	\bV_i = \Mo(t_i;\btheta)\bP(t_i)\bT_i^\top, \quad
	\bP_i = \frac{\partial \Mo(t_i;\btheta)}{\partial \theta_k}\bP(t_i)\bT_i^\top, \\
	 & \bQ_i = \Mo(t_i;\btheta), \quad
	\bR_i = \eu^{\bF\Delta t_i}, \quad
	\bD_i = \frac{\partial \Mo(t_i;\btheta)}{\partial \theta_k}\bP(t_i)\Mo(t_i;\btheta)^\top, \quad
	i \in \{1,\ldots,n\}.
\end{align*}

\subsubsection*{Case 2: only the state depends on the hyperparameter}
Next, we consider the case where only the state $\bx(t;\btheta)$ depends on the hyperparameter $\theta_k$. This is the case, for example, when the hyperparameter $\theta_k$ is a length scale in a Matérn kernel or an undamped angular frequency or quality factor in a celerite kernel. For a nonstationary kernel $k$ of the form \eqref{eq:sde:nonstationary:cov}, differentiating $k$ directly yields
\begin{align}\label{eq:kernel:deriv:state:nonstationary}
	\frac{\partial k(t,t')}{\partial \theta_k}
	 & =
	\begin{cases}
		\bC(t)\left(\frac{\partial \eu^{\bF(\btheta) (t-t')}}{\partial\theta_k} \bP(t';\btheta) + \eu^{\bF(\btheta) (t-t')}\frac{\partial \bP(t';\btheta)}{\partial\theta_k}\right)\bC(t')^\top,        & t \geq t', \\
		\bC(t)\left(\frac{\partial \bP(t;\btheta)}{\partial\theta_k}\eu^{\bF(\btheta)^\top (t'-t)} + \bP(t;\btheta)\frac{\partial \eu^{\bF(\btheta)^\top (t'-t)}}{\partial\theta_k}\right)\bC(t')^\top, & t < t'.
	\end{cases}
\end{align}
For a stationary kernel, we simply replace $\bP(t;\btheta)$ with $\bP_\infty$. The partial derivative of the matrix exponential $\eu^{\bF(\btheta) (t-t')}$ with respect to $\theta_k$ can be computed using the Fréchet derivative formula for the matrix exponential \cite{NaH:95,Hig:08}, i.e., we have
\begin{align}
	\eu^{\tilde{\bF}(\btheta) \tau} =
	\begin{bmatrix}
		\eu^{\bF(\btheta)\tau}                                     & \bzero                 \\
		\frac{\partial \eu^{\bF(\btheta) \tau}}{\partial \theta_k} & \eu^{\bF(\btheta)\tau}\end{bmatrix},
	\quad \text{where $\tau = t-t'$ and }
	\tilde{\bF}(\btheta) = \begin{bmatrix}\bF(\btheta) & \bzero \\ \frac{\partial \bF(\btheta)}{\partial \theta_k} & \bF(\btheta)\end{bmatrix}.
\end{align}
We see that the $(2,1)$ block of $\eu^{\tilde{\bF}(\btheta) (t-t')}$ is the desired partial derivative of $\eu^{\bF(\btheta) (t-t')}$ with respect to $\theta_k$, i.e.,
\[ \frac{\partial }{\partial \theta_k}\eu^{\bF(\btheta) (t-t')} = \begin{bmatrix} \bzero & \bI\end{bmatrix} \eu^{\tilde{\bF}(\btheta)(t-t')} \begin{bmatrix} \bI \\ \bzero \end{bmatrix},\]
which leads to the symmetric SSS representation of $\partial \bK / \partial \theta_k$,
\begin{align}\label{eq:case2:generators:symmetric}
	\journallayout{\begin{aligned}}{}
        \bU_i & = \begin{bmatrix}\bzero & \bC(t_i)\end{bmatrix}, \quad
        \bT_i = \eu^{\btF(\btheta) \Delta t_i},  \quad
        \bV_i = \bC(t_i)\begin{bmatrix} \bP(t_i;\btheta) & \frac{\partial\bP(t_i;\btheta)}{\partial \theta_k}\end{bmatrix}\bT_i^\top, \journallayout{\\}{&}
        \bD_i & = \bC(t_i)\frac{\partial\bP(t_i;\btheta)}{\partial \theta_k}\bC(t_i)^\top, \journallayout{\quad i\in \{1,\ldots,n\}.}{}
    \journallayout{\end{aligned}}{}
\end{align}
\journallayout{}{for $i\in \{1,\ldots,n\}$.} We note that the SSS order of this presentation of $\partial \bK / \partial \theta_k$ is twice that of $\bK$.

For the Matérn and celerite kernels, where we have closed-form expressions for $\bP_\infty(\btheta)$, we can compute $\partial \bP_{\infty}(\btheta)/\partial \theta_k$ analytically. Similarly, for the spline kernel, the state covariance $\bP(t;\btheta)$ only depends on $\sigma^2$, and $\partial \bP(t;\btheta)/\partial \sigma^2$ simplifies to $\bP(t;\btheta)/\sigma^2$. More generally, for a stationary kernel, $\partial \bP_\infty(\btheta)/\partial \theta_k$ can be computed by differentiating the Lyapunov equation $\bF(\btheta)\bP_\infty(\btheta) + \bP_\infty(\btheta)\bF(\btheta)^\top + \bG(\btheta) \bQ_{\bw}\bG(\btheta)^\top = \bzero$ with respect to $\theta_k$, which yields
\journalmultline{
	\bzero = \bF(\btheta)\frac{\partial \bP_\infty(\btheta)}{\partial \theta_k} + \frac{\partial \bP_\infty(\btheta)}{\partial \theta_k}\bF(\btheta)^\top
}{
	{}+ \frac{\partial \bF(\btheta)}{\partial \theta_k}\bP_\infty(\btheta) + \bP_\infty(\btheta)\frac{\partial \bF^\top(\btheta)}{\partial \theta_k} + \frac{\partial (\bG(\btheta)\bQ_{\bw}\bG(\btheta)^\top)}{\partial \theta_k}.
}
This is a Lyapunov equation in $\partial \bP_\infty/\partial \theta_k$ that can be solved numerically using the Bartels--Stewart algorithm \cite{BaS:72}. In the nonstationary case, $\partial \bP(t;\btheta)/\partial \theta_k$ can be computed by differentiating the differential Lyapunov equation $\frac{\dr}{\dr t}\bP(t;\btheta) = \bF(\btheta)\bP(t;\btheta) + \bP(t;\btheta)\bF(\btheta)^\top + \bG(\btheta)\bQ_{\bw}\bG(\btheta)^\top$ with respect to $\theta_k$, which yields
\journalmultline{
	\frac{\dr}{\dr t}\frac{\partial \bP(t;\btheta)}{\partial \theta_k} = \bF(\btheta)\frac{\partial \bP(t;\btheta)}{\partial \theta_k} + \frac{\partial \bP(t;\btheta)}{\partial \theta_k}\bF(\btheta)^\top
}{
	{}+ \frac{\partial \bF(\btheta)}{\partial \theta_k}\bP(t;\btheta) + \bP(t;\btheta)\frac{\partial \bF^\top(\btheta)}{\partial \theta_k} + \frac{\partial (\bG(\btheta)\bQ_{\bw}\bG(\btheta)^\top)}{\partial \theta_k}.
}
This is a linear ordinary differential equation (ODE) in $\partial \bP(t;\btheta)/\partial \theta_k$ with initial condition $\partial \bP(t_0;\btheta)/\partial \theta_k$, and it can be solved numerically using standard ODE solvers.

\subsubsection*{Case 3: time warping}
Our last case considers the situation where the hyperparameter $\theta_k$ enters the model through a monotone time warp function $\phi(t;\theta_k)$. Specifically, we will assume that the kernel is given by $\check{k}(t,t') = k(\phi(t;\theta_k),\phi(t';\theta_k))$, where $k$ is an LTI-SDE kernel that does not depend on $\theta_k$. Assuming that $\bC(\tau)$ is continuously differentiable and letting $\tau = \phi(t;\theta_k)$ and $\tau' = \phi(t';\theta_k)$, the chain rule yields
\begin{align}\label{eq:kernel:deriv:timewarp:chain}
    \frac{\partial}{\partial \theta_k} \check{k}(t,t';\btheta) & = a(t)\frac{\partial k(\tau,\tau')}{\partial \tau} + a(t')\frac{\partial k(\tau,\tau')}{\partial \tau'},
\end{align}
where $a(t) = \partial\phi(t;\theta_k)/\partial \theta_k = \partial \tau/\partial \theta_k$. To derive an SSS representation of $\partial \bK/\partial \theta_k$, it suffices to consider $\tau \geq \tau'$ and apply the chain rule to the definition of $k$ in \eqref{eq:sde:nonstationary:cov}, which leads to
\begin{align}
    \frac{\partial k(\tau,\tau')}{\partial \tau} & = \bH(\tau)\eu^{\bF(\tau-\tau')}\bP(\tau')\bC(\tau')^\top,
\end{align}
where $\bH(\tau) = \bC(\tau)\bF + \frac{\partial\bC(\tau)}{\partial\tau}$. Similarly, using the differential Lyapunov equation $\partial \bP(\tau')/\partial \tau' = \bF\bP(\tau') + \bP(\tau')\bF^\top + \bG\bQ_\bw\bG^\top$, we have
\begin{align}
    \frac{\partial k(\tau,\tau')}{\partial \tau'} & =\bC(\tau)\eu^{\bF(\tau-\tau')}\bP(\tau') \bH(\tau')^\top + \bC(\tau)\eu^{\bF(\tau-\tau')} \bG\bQ_{\bw}\bG^\top \bC(\tau')^\top.
\end{align}
Combining these two expressions with \eqref{eq:kernel:deriv:timewarp:chain} yields
\begin{align*}
	    & \frac{\partial}{\partial \theta_k} \check{k}(t,t';\btheta) = \journallayout{\\ & \quad}{}
	    \begin{bmatrix}
	    a(t) \bH(\tau) &  \bC(\tau) \\
	    \end{bmatrix} (\bI_2\otimes \eu^{\bF(\tau-\tau')})
	    \begin{bmatrix}
	    \bP(\tau') \bC(\tau')^\top \\
	    a(t') \left(\bP(\tau') \bH(\tau')^\top + \bG\bQ_{\bw}\bG^\top \bC(\tau')^\top\right)
    \end{bmatrix},
\end{align*}
from which we find an SSS representation of $\partial \bK/\partial \theta_k$ with generators
\begin{align*}
	\bU_i &= \begin{bmatrix}
    a(t_i) \bH(\tau_i) &  \bC(\tau_i) \\
    \end{bmatrix}, &
	\bT_i &= \bI_2 \otimes \eu^{\bF \Delta \tau_i}, &
	\bV_i &= \bU_i\bB_i\bT_i^\top, &
	\bD_i &= \bU_i\bB_i\bU_i^\top,
\end{align*}
for $i \in \{1,\ldots,n\}$, where
\[\bB_i = \begin{bmatrix}
         \bzero & \bP(\tau_i) \\ \bP(\tau_i) & a(t_i)\bG\bQ_\bw\bG^\top
     \end{bmatrix}.\]
Like in case 2, the SSS order of this presentation of $\partial \bK / \partial \theta_k$ is twice that of $\bK$.

The expression for $\partial \bK/\partial \theta_k$ can be simplified further if $\bC(\tau)$ is constant or if the process is stationary, in which case $\bP(\tau)$ can be replaced with $\bP_\infty$.

\subsection{Posterior mean and covariance}\label{sub:sec:posterior}

We now present methods for evaluating the posterior mean, matrix-vector products with the posterior covariance and precision, and the diagonal blocks of the posterior covariance. Recall that both the training inputs $\bt = (t_1,\ldots,t_n)$ and the test inputs $\bt_* = (t_{*1},\ldots,t_{*\ntest})$ are assumed to be nondecreasing, which implies that $\bK$ and $\bK_{**}$ are SSS matrices. However, the augmented input vector $\tilde{\bt} = (\bt, \bt_*)$ is not necessarily nondecreasing, so $\bK_*$ is not SSS in general. To address this, we introduce a permutation matrix $\bm{\Pi}$ of order $N = n+\ntest$ such that
\[ \hat \bt = \bm{\Pi} \begin{bmatrix} \bt \\ \bt_* \end{bmatrix} \]
is a vector of nondecreasing inputs, i.e., $\hat t_1 \leq \hat t_2 \leq \dots \leq \hat t_{N}$. We also define an augmented kernel matrix $\hat{\bK}$ of order $Nd$ with blocks $\hat{\bK}_{ij} = k(\hat t_i, \hat t_j)$ and an augmented noise covariance matrix $\hat{\bm{\Sigma}}(\gamma)$ of order $Nd$ with diagonal blocks
\[
	\hat{\bm{\Sigma}}_{i}(\gamma) =
	\begin{cases}
		\bSigma_i,    & i \in \cI,   \\
		\gamma \bI_d, & i \in \cI_*,
	\end{cases}
\]
where the index sets $\cI$ and $\cI_*$ form a disjoint partition of $\{1,\ldots,N\}$, corresponding to training and test inputs, respectively. Thus, we have
\begin{align}\label{eq:Khat}
	\hat{\bK} = (\bm{\Pi} \otimes \bI_d) \begin{bmatrix} \bK & \bK_*^\top \\ \bK_* & \bK_{**} \end{bmatrix} (\bm{\Pi}^\top \otimes \bI_d), \qquad
	\hat{\bm{\Sigma}}(\gamma) = (\bm{\Pi} \otimes \bI_d) \begin{bmatrix} \bm{\Sigma} & \bzero \\ \bzero & \bm{\Sigma}_*(\gamma) \end{bmatrix} (\bm{\Pi}^\top \otimes \bI_d),\end{align}
where $\bm{\Sigma}_*(\gamma) = \blkdiag(\gamma \bI_d,\ldots,\gamma\bI_d)$.  We also define selection operators
\begin{align}\label{eq:selection_operator}
\cS = \begin{bmatrix} \bI_{nd} & \bzero_{\ntest d} \end{bmatrix} (\bm{\Pi}^\top \otimes \bI_d), \qquad \cS_* = \begin{bmatrix} \bzero_{nd} & \bI_{\ntest d} \end{bmatrix} (\bm{\Pi}^\top \otimes \bI_d), \end{align}
such that $\bK = \cS \hat{\bK} \cS^\top$, $\bK_* = \cS_* \hat{\bK} \cS^\top$, $\bK_{**} = \cS_* \hat{\bK} \cS_*^\top$, $\bm{\Sigma} = \cS \hat{\bm{\Sigma}}(\gamma) \cS^\top$, and $\bm{\Sigma}_*(\gamma ) = \cS_* \hat{\bm{\Sigma}}(\gamma) \cS_*^\top$.
We note that the permutation guarantees that $\hat{\bK}$ and $\hat{\bK} + \hat{\bm{\Sigma}}(\gamma)$ are symmetric SSS matrices.

\subsubsection*{Posterior mean}

The posterior mean can then be expressed as
\begin{align}\label{eq:gp:posterior:mean:sss}
	\bm{\mu}_{\bff_*|\by} = \cS_* \hat{\bK} \cS^\top \bm{\alpha}, \qquad \bm{\alpha} = (\bK + \bm{\Sigma})^{-1}\by,
\end{align}
which can be evaluated in $\cO(N(d^3+p^3))$ time by first solving the linear system $(\bK + \bm{\Sigma})\bm{\alpha} = \by$ using the SSS solver and then computing the matrix-vector product $\cS_* \hat{\bK} \cS^\top \bm{\alpha}$ using the SSS matrix-vector algorithm.

\subsubsection*{Matrix-vector products for posterior covariance and precision}
Matrix-\allowbreak{}vector products with the posterior covariance $\bm{\Sigma}_{\bff_*|\by}$ can be evaluated in $\cO(N(d^3+p^3))$ time using the expression
\begin{align}\label{eq:gp:posterior:cov:matvec}
	\bm{\Sigma}_{\bff_*|\by} \bv & = \bK_{**} \bv - \cS_* \hat{\bK} \cS^\top (\bK+\bm{\Sigma})^{-1} \cS \hat{\bK} \cS_*^\top \bv,
\end{align}
which uses the same SSS solver and matrix-vector product algorithms as in \eqref{eq:gp:posterior:mean:sss}.
Similarly, if $\bm{\Sigma}_{\bff_*|\by}$ is positive definite, then we can also compute matrix-vector products with the posterior precision $\bm{\Sigma}_{\bff_*|\by}^{-1}$ in $\cO(N(d^3+p^3))$ time by noting that
\begin{align}\label{eq:gp:posterior:prec:matvec}
	\bm{\Sigma}_{\bff_*|\by}^{-1} = \cS_* (\hat{\bK} + \hat{\bm{\Sigma}}(0))^{-1} \cS_*^\top.
\end{align}
This expression follows from the block-matrix inversion formula, which can be derived from the block factorization
\begin{align*}
	\begin{bmatrix}
		\bK +\bm{\Sigma} & \bK_*^\top \\
		\bK_*            & \bK_{**}
	\end{bmatrix} =
	\begin{bmatrix}
		\bI                           & \bzero \\
		\bK_*(\bK + \bm{\Sigma})^{-1} & \bI
	\end{bmatrix}
	\begin{bmatrix}
		\bK + \bm{\Sigma} & \bzero                                             \\
		\bzero            & \bm{\Sigma}_{\bff_*|\by}
	\end{bmatrix}
		\begin{bmatrix}
			\bI    & (\bK + \bm{\Sigma})^{-1}\bK_*^\top \\
			\bzero & \bI
	\end{bmatrix},
\end{align*}
where $\bm{\Sigma}_{\bff_*|\by} = \bK_{**} - \bK_*(\bK + \bm{\Sigma})^{-1}\bK_*^\top$. If $\bm{\Sigma}_{\bff_*|\by}$ is positive definite, then the above factorization is invertible and we have
\begin{align*}
	\bm{\Sigma}_{\bff_*|\by}^{-1} =
	\begin{bmatrix} \bzero & \bI \end{bmatrix}
	\begin{bmatrix}
		\bK +\bm{\Sigma} & \bK_*^\top \\
		\bK_*            & \bK_{**}
	\end{bmatrix}^{-1}
	\begin{bmatrix} \bzero \\ \bI \end{bmatrix}
	= \cS_* (\hat{\bK} + \hat{\bm{\Sigma}}(0))^{-1} \cS_*^\top.
\end{align*}

\subsubsection*{Diagonal blocks of posterior covariance}

The diagonal blocks of $\bm{\Sigma}_{\bff_*|\by}$ are often of particular interest but require a different approach, which we now describe. First, note that
\begin{align*} \lim_{\gamma \to \infty}
	\begin{bmatrix}
		\bK +\bm{\Sigma} & \bK_*^\top                         \\
		\bK_*            & \bK_{**} + \bm{\Sigma}_{*}(\gamma)
	\end{bmatrix}^{-1} =
	\begin{bmatrix}
		(\bK + \bm{\Sigma})^{-1}       & \bzero \\
		\bzero & \bzero
	\end{bmatrix},
\end{align*}
which leads to the following expression for the posterior covariance:
\[\bm{\Sigma}_{\bff_*|\by} = \bK_{**} - \bK_*(\bK + \bm{\Sigma})^{-1}\bK_*^\top = \lim_{\gamma\to\infty}  \cS_*(\hat \bK -\hat \bK (\hat \bK + \hat{\bm{\Sigma}}(\gamma))^{-1} \hat \bK )\cS_*^\top. \]
Letting $\hat{\bM}(\gamma) = \hat \bK + \hat{\bm{\Sigma}}(\gamma)$ such that $\hat \bK = \hat{\bM}(\gamma) - \hat{\bm{\Sigma}}(\gamma)$, we obtain
\begin{align}\label{eq:gp:posterior:cov:perturbed}
	\hat \bK -\hat \bK (\hat \bK + \hat{\bm{\Sigma}}(\gamma))^{-1} \hat \bK = \hat{\bm{\Sigma}}(\gamma) -\hat{\bm{\Sigma}}(\gamma)\hat{\bM}(\gamma)^{-1}\hat{\bm{\Sigma}}(\gamma).
\end{align}
The basic idea is to compute the diagonal blocks of the right-hand side of \eqref{eq:gp:posterior:cov:perturbed} recursively using the SSS structure of $\hat{\bM}(\gamma)$ and then take the limit as $\gamma \to \infty$. \Cref{app:posterior:variance} provides the details of this derivation, which  combines the SSS Cholesky factorization algorithm in \Cref{alg:cholesky} with the diagonal inverse algorithm in \Cref{alg:diagonal:inverse}, leading to \Cref{alg:posterior:variance}. The computational complexity of the algorithm is $\cO(N(d^3+p^3))$.

\begin{algorithm}[H]
	\caption{Diagonal blocks of posterior covariance}
	\label{alg:posterior:variance}
	\KwIn{SSS matrix $\hat{\bK}$ of the form \eqref{eq:Khat} with SSS generators $(\bU, \bT, \bV, \bD)$, $\hat{\bm{\Sigma}}(0)$, and  index set $\cI$ containing the indices of the training inputs.}
	\KwOut{Diagonal blocks $\hat{\bm{\Omega}}_1,\ldots,\hat{\bm{\Omega}}_N$ of $\lim_{\gamma\to\infty} (\hat{\bK} - \hat{\bK}(\hat{\bK} + \hat{\bm{\Sigma}}(\gamma))^{-1}\hat{\bK})$.}
	$\bm{\Psi}_1 \gets \bzero_{p\times p}$\;
	\For{$i=1$ \KwTo $N$}{
		\eIf{$i \in \cI$}{
			$\btD_i \gets \chol(\bD_i - \bU_i\bm{\Psi}\bU_i^\top + \hat{\bm{\Sigma}}_i)$ \tcp*{Lower triangular Cholesky factor}
			$\btV_i^\top \gets (\bV_i^\top - \bT_i\bm{\Psi}\bU_i^\top)\btD_i^{-\top}$\;
			$\boU_i \gets -\btD_i^{-1}\bU_i$, $\boT_i \gets \bT_i + \btV_i^\top \boU_i$, $\bA_i \gets \btD_i^{-\top}\btD_i^{-1}\hat{\bm{\Sigma}}_i$\;
		}{
			$\btV_i^\top \gets \bzero$, $\boU_i \gets \bzero$, $\boT_i \gets \bT_i$, $\bA_i \gets \bI$\;
		}
		$\bm{\Psi}_{i+1} \gets \bT_i\bm{\Psi}_i\bT_i^\top + \btV_i^\top\btV_i$\;
	}

	$\bm{\Phi} \gets \bzero_{p\times p}$\;
	\For{$i = N$ \KwTo $1$}{
		$\bW_i^\top \gets (\bV_i^\top - \bT_i\bm{\Psi}_i\bU_i^\top)\bA_i$\;
		$\hat{\bm{\Omega}}_i \gets (\bD_i - \bU_i\bm{\Psi}_i\bU_i^\top)\bA_i - \bW_i\bm{\Phi}\bW_i^\top$\;
		$\bm{\Phi} \gets \boT_i^\top\bm{\Phi}\boT_i + \boU_i^\top\boU_i$\;
	}
	\Return{$(\hat{\bm{\Omega}}_1,\ldots,\hat{\bm{\Omega}}_N)$}
\end{algorithm}

\subsection{Sampling from the posterior distribution}\label{sub:sec:posterior:sampling}
Samples from the posterior distribution \eqref{eq:gp:posterior} can be obtained by first sampling from the joint prior distribution of $\by$ and $\bff_*$ in \eqref{eq:joint:distribution} and then conditioning on the observed data, which is a technique known as Matheron’s update rule \cite{wilson2021}. First, we draw a sample from the joint prior distribution of $\by$ and $\bff_*$, i.e.,
\begin{equation}
	\mathcal{N}\left(\bzero, \begin{bmatrix}\bK + \bm{\Sigma} & \bK_*^\top \\ \bK_* & \bK_{**}\end{bmatrix}\right).
\end{equation}
Assuming that this is a nondegenerate distribution, we readily obtain a sample $(\tilde\by, \tilde \bff_*)$ as
\begin{align*}
	\begin{bmatrix}\tilde\by \\ \tilde \bff_* \end{bmatrix} =
	(\bm{\Pi}^\top \otimes \bI_d) \hat{\bL} \bm{\xi}, \quad \bm{\xi} \sim \mathcal{N}(\bzero, \bI),
\end{align*}
where $\hat{\bL}$ denotes the Cholesky factor of the SSS matrix $\hat{\bK} + \hat{\bm{\Sigma}}(0)$. A sample $\tilde \bz$ from the desired posterior distribution may then be obtained by conditioning on the observations $\by$, i.e.,
\begin{align}\label{eq:gp:posterior:sample}
	\tilde \bz  = \tilde{\bff}_* + \bK_*(\bK + \bm{\Sigma})^{-1}(\by - \tilde\by)  =\tilde{\bff}_* + \cS_* \hat{\bK}\cS^\top (\bK + \bm{\Sigma})^{-1}(\by - \tilde\by).
\end{align}
Given SSS Cholesky factors of $\bK+\bm{\Sigma}$ and $\hat{\bK} + \hat{\bm{\Sigma}}(0)$, which can be computed once and reused for multiple samples, a sample $\tilde \bz$ can be computed in $\cO(N(d^2+p^2))$ time using the SSS solver and matrix-vector product algorithms.

\section{Numerical experiments}\label{sec:numerical_experiments}

To validate the proposed SSS representation of LTI-SDE-induced kernels, we perform a series of numerical experiments to assess the numerical accuracy of the proposed algorithms. The experiments are based on a Julia implementation of the SSS algorithms, which is available in the \texttt{SSSMatrices.jl} package.\footnote{Available on GitHub: \url{https://github.com/mipals/SSSMatrices.jl}} In the first two experiments, we consider the kernels described in \Cref{sec:sde-to-sss}, evaluated at $n=100$ training inputs drawn from the standard uniform distribution and subsequently sorted in ascending order. The kernels are scalar-valued ($d=1$), and hence $\bK$ is $100\times 100$. \Cref{tab:configuration} provides an overview of the kernels and the corresponding hyperparameter values that we use for the experiments.
\begin{table}[htbp]
	\centering
	\begin{tabular}{llrr}
		\toprule
		Kernel                & Hyperparameters                                 & \(\kappa(\bK)\)     & \(\kappa(\bK+\sigma_{\mathrm{n}}^2\bI)\) \\
		\midrule
		Matérn ($m=2$)        & \(\btheta = (\sigma,\alpha)=(1,2)\)             & $3.2 \cdot 10^{18}$ & $9.2 \cdot 10^{5}$                       \\
		Celerite              & \(\btheta = (\sigma,\omega_0,Q)=(1,1.2,0.6)\)   & $1.4 \cdot 10^{13}$ & $1.6 \cdot 10^{5}$                       \\
		Diagonal-correlated   & \(\btheta = (\sigma,\alpha,\beta)=(1,0.5,0.6)\) & $5.3 \cdot 10^{6}$  & $4.7 \cdot 10^{5}$                       \\
		Tuned-correlated      & \(\btheta = (\sigma,\alpha)=(1,0.6)\)           & $6.7 \cdot 10^{6}$  & $6.3 \cdot 10^{5}$                       \\
		Spline ($m=2$)        & \(\btheta = (\sigma)=(1)\)                      & $6.6 \cdot 10^{12}$ & $7.4 \cdot 10^{4}$                       \\
		Stable spline ($m=2$) & \(\btheta = (\sigma,\alpha)=(1,0.3)\)           & $1.7 \cdot 10^{15}$ & $2.2 \cdot 10^{5}$                       \\
		\bottomrule
	\end{tabular}
	\medskip

	\caption{Hyperparameters used for each kernel in the numerical experiments. We use $\sigma_{\mathrm{n}}=0.01$ in all experiments. The condition numbers in columns 3--4 were computed using Float64.}
	\label{tab:configuration}
\end{table}
The third experiment illustrates the use of the SSS representation for computing posterior means and covariances in a GP regression setting with a matrix-valued kernel ($d=2$).

\subsection{Experiment 1: Accuracy}
The accuracy of the SSS representation is assessed by comparing its Float64 matrix-vector products, linear-system solutions, and log determinants with dense reference computations based on Julia’s BigFloat type at 256-bit precision, corresponding to a machine epsilon of $2^{-255}\approx 10^{-77}$.

To establish a baseline for the relative error expected from Float64 arithmetic, we also compare the same operations performed using dense Float64 matrices with the high-precision reference. The results are summarized in \Cref{tab:correctness}, which shows that the SSS representation achieves relative errors on the same order of magnitude as the dense Float64 computations. The relative errors displayed in \Cref{tab:correctness} are computed as
$\|g(\bK) - g(\bK_{\text{big}})\|_2/\|g(\bK_{\text{big}})\|_2$,
where $g(\cdot)$ denotes the matrix operation of interest, and $g(\bK)$ and $g(\bK_{\text{big}})$ are the results obtained using Float64 and BigFloat, respectively.
For the matrix-vector product, $g(\bK) = \bK \bx$, and for the linear solve, $g(\bK)=\bK^{-1}\bx$, we generate a random $\bx$ with independent standard normal entries. The results in \Cref{tab:correctness} show that SSS computations and the dense Float64 computations have similar relative errors compared to the high-precision reference.
\begin{table}[htbp]
	\centering
	\journallayout{\begingroup\renewcommand{\footnotesize}{\fontsize{6}{7}\selectfont}}{}
	\begin{tabular}{lrrrrrr}
		\toprule
		Kernel
		                      & \multicolumn{2}{c}{Matrix-vector product}
		                      & \multicolumn{2}{c}{Solve}
		                      & \multicolumn{2}{c}{Log determinant}                                                                                                                             \\
		\cmidrule(lr){2-3}
		\cmidrule(lr){4-5}
		\cmidrule(lr){6-7}
		                      & \multicolumn{1}{c}{Dense}                 & \multicolumn{1}{c}{SSS}
		                      & \multicolumn{1}{c}{Dense}                 & \multicolumn{1}{c}{SSS}
		                      & \multicolumn{1}{c}{Dense}                 & \multicolumn{1}{c}{SSS}                                                                                             \\
		\midrule
		Matérn                & $5.2 \cdot 10^{-15}$ & $5.3 \cdot 10^{-15}$ & $1.7 \cdot 10^{-11}$ & $6 \cdot 10^{-12}$   & $7.7 \cdot 10^{-15}$ & $8.9 \cdot 10^{-14}$ \\
        Celerite              & $5.4 \cdot 10^{-15}$ & $1.6 \cdot 10^{-14}$ & $4.3 \cdot 10^{-12}$ & $4.9 \cdot 10^{-13}$ & $6.5 \cdot 10^{-15}$ & $9.5 \cdot 10^{-15}$ \\
        Diagonal-correlated   & $1.8 \cdot 10^{-15}$ & $2.7 \cdot 10^{-15}$ & $1.7 \cdot 10^{-12}$ & $2.2 \cdot 10^{-13}$ & $2.1 \cdot 10^{-15}$ & $9.8 \cdot 10^{-16}$ \\
        Tuned-correlated      & $7.6 \cdot 10^{-16}$ & $7.1 \cdot 10^{-16}$ & $4.3 \cdot 10^{-13}$ & $2.7 \cdot 10^{-13}$ & $1.8 \cdot 10^{-15}$ & $2.5 \cdot 10^{-15}$ \\
	    Spline                  & $1.5 \cdot 10^{-15}$ & $7.5 \cdot 10^{-16}$ & $1.1 \cdot 10^{-12}$ & $2.7 \cdot 10^{-13}$ & $2.2 \cdot 10^{-15}$ & $2.3 \cdot 10^{-15}$ \\
	    Stable spline           & $8.1 \cdot 10^{-15}$ & $2.1 \cdot 10^{-15}$ & $5.5 \cdot 10^{-12}$ & $6.8 \cdot 10^{-13}$ & $4 \cdot 10^{-15}$   & $1.5 \cdot 10^{-14}$ \\
		\bottomrule
	\end{tabular}
	\journallayout{\endgroup}{}
	\medskip
	\caption{Relative errors of matrix operations computed using dense and SSS representations, relative to results obtained with BigFloat at 256-bit precision. The hyperparameters are listed in \Cref{tab:configuration}. The matrix-vector product and solve operations are $\bK \bx$ and $\bK^{-1}\bx$, respectively, whereas the log determinant is $\log\det(\bK + \sigma_{\mathrm{n}}^2\,\bI)$.}
	\label{tab:correctness}
\end{table}

\subsection{Experiment 2: Gradient validation}
To assess the accuracy of the SSS representations of the kernel derivative matrices, we compute the partial derivatives of the log marginal likelihood with respect to the hyperparameters. As in the previous experiment, we compare Float64 SSS computations with dense reference computations performed in both Float64 and BigFloat arithmetic. \Cref{tab:gradient} presents the absolute and relative errors with respect to the BigFloat reference. We note that in each case, an SSS representation of the partial derivative of the kernel matrix is computed using one of the techniques presented in \Cref{sub:sec:analytical:derivatives}.
\begin{table}[htbp]
	\centering
	\begin{tabular}{lcrrrr}
		\toprule
		Kernel                & $\theta_k$
		                      & \multicolumn{2}{c}{Absolute error}
		                      & \multicolumn{2}{c}{Relative error}                                                                                                     \\
		\cmidrule(lr){3-4}
		\cmidrule(lr){5-6}
		                      &                                    & \multicolumn{1}{c}{Dense} & \multicolumn{1}{c}{SSS}
		                      & \multicolumn{1}{c}{Dense}          & \multicolumn{1}{c}{SSS}                                                                           \\
		\midrule
		Matérn ($m=2$)        & \(\alpha\)                         & $7.7 \cdot 10^{-7}$       & $4.2 \cdot 10^{-7}$     & $5.9 \cdot 10^{-11}$ & $5 \cdot 10^{-11}$   \\
		Celerite              & \(\omega_0\)                       & $3.7 \cdot 10^{-7}$       & $5.9 \cdot 10^{-8}$     & $6 \cdot 10^{-9}$    & $6.6 \cdot 10^{-10}$ \\
		                      & \(Q\)                              & $3.3 \cdot 10^{-7}$       & $7.4 \cdot 10^{-8}$     & $1.6 \cdot 10^{-9}$  & $1.7 \cdot 10^{-10}$ \\
		Diagonal-correlated   & \(\alpha\)                         & $1.1 \cdot 10^{-7}$       & $1.4 \cdot 10^{-8}$     & $1.3 \cdot 10^{-12}$ & $1.6 \cdot 10^{-13}$ \\
		                      & \(\beta\)                          & $4.9 \cdot 10^{-8}$       & $3.1 \cdot 10^{-8}$     & $1.7 \cdot 10^{-12}$ & $9 \cdot 10^{-13}$   \\
		Tuned-correlated      & \(\alpha\)                         & $4.2 \cdot 10^{-8}$       & $2.5 \cdot 10^{-8}$     & $9.8 \cdot 10^{-13}$ & $5.7 \cdot 10^{-13}$ \\
		Stable spline ($m=2$) & \(\alpha\)                         & $7.5 \cdot 10^{-7}$       & $1.5 \cdot 10^{-7}$     & $2.6 \cdot 10^{-11}$ & $3.1 \cdot 10^{-12}$ \\
		\bottomrule
	\end{tabular}
	\medskip

	\caption{Absolute and relative errors for partial derivatives of the log marginal likelihood, computed using dense and SSS representations, relative to results obtained with BigFloat at 256-bit precision. The hyperparameter values are listed in \Cref{tab:configuration}.}
	\label{tab:gradient}
\end{table}

\subsection{Experiment 3: Posterior distribution and sampling}
We now demonstrate the use of the SSS representation for Gaussian process regression based on the Matérn-$5/2$ kernel with hyperparameters $(\sigma,\alpha)=(1.0,2.0)$. We will simulate observations of the first two entries of the state vector, corresponding to the function value and its derivative, i.e., $\bC(t) = \bigl[\begin{smallmatrix} \be_1 & \be_2\end{smallmatrix}\bigr]^\top \in \reals^{2 \times 3}$. To generate a ground truth reference and training data, we first form a vector $\hat\bt$ of $N=135$ monotonically increasing and equidistant points in the interval $[1,10]$. The training points $\bt$ are the subset of $\hat\bt$ lying in $[1,3.5]\cup[6.5,10]$, and the test points are the remaining points of $\hat\bt$, lying in $(3.5,6.5)$. We then simulate noisy observations by computing the Cholesky factorization $\hat\bK=\hat\bL\hat\bL^\top$ and computing a sample $\hat{\bff}$ from $\cN(\bzero,\hat{\bK})$ as $\hat\bff = \hat \bL\bm{\xi}$, where $\bm{\xi}\sim\cN(\bzero, \bI)$.
Then, using the selection operators defined in \eqref{eq:selection_operator}, we define $\bff = \cS \hat{\bff}$, $\bff_* = \cS_*\hat{\bff}$, and $\by = \bff + \bm{\varepsilon}$, where the noise term $\bm{\varepsilon}$ is sampled from $ \cN(\bzero,\bI_n \otimes \diag(1.1,1.5))$.
Finally, we use \eqref{eq:gp:posterior:mean:sss} to compute the posterior mean and \Cref{alg:posterior:variance} to compute the posterior variance on the full grid $\hat\bt$, and we draw $10$ samples from the posterior distribution using the expression in \eqref{eq:gp:posterior:sample}. \Cref{fig:matern:posterior:inference:sampling} shows the 95\% credible interval, the samples, and the posterior mean for both the function value and the derivative.
\begin{figure}[htbp]
	\centering
	\includegraphics[width=1.0\linewidth]{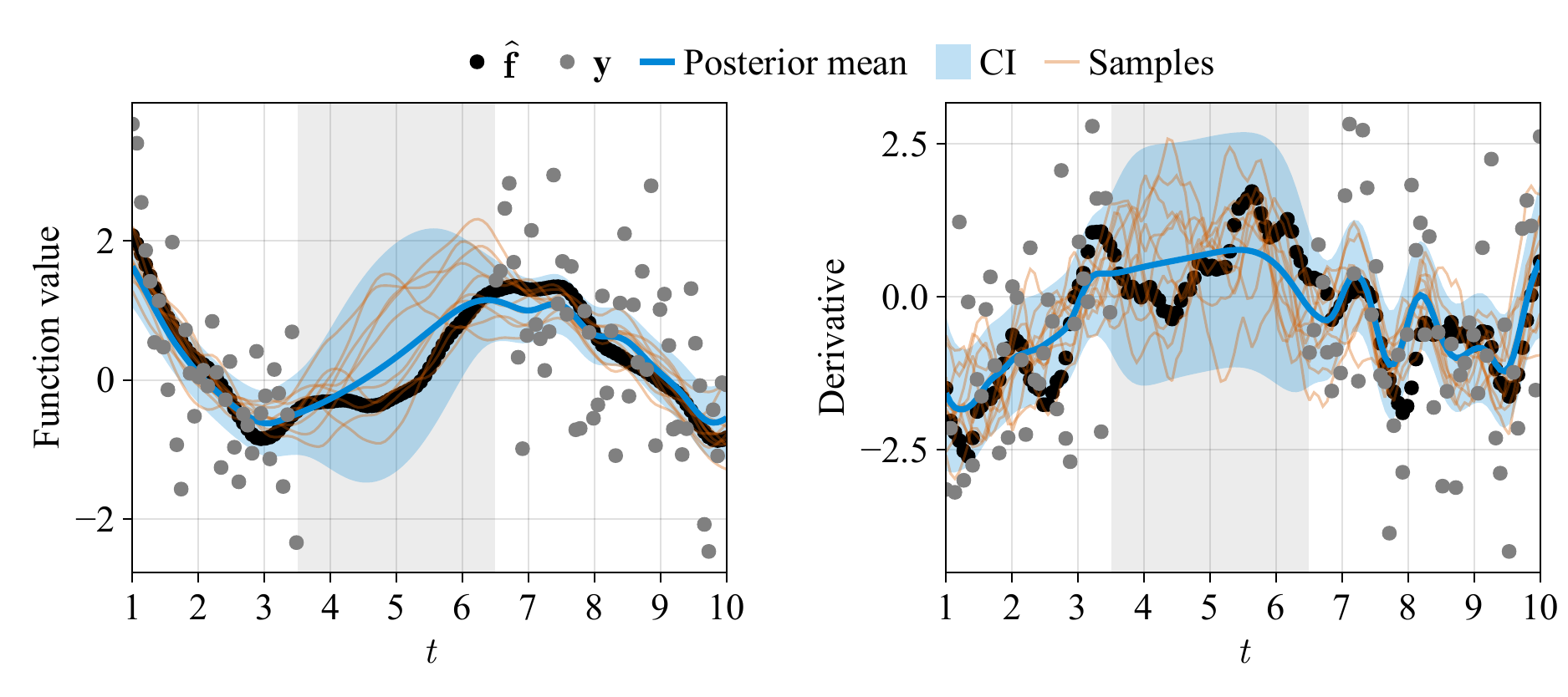}
	\caption{The 95\% credible interval, 10 samples, and posterior mean for a Matérn-$5/2$ GP with hyperparameters $(\sigma,\alpha)=(1.0,2.0)$. Both function value and derivative are observed.}
	\label{fig:matern:posterior:inference:sampling}
\end{figure}

\section{Conclusion}\label{sec:conclusion}
We have shown that kernels induced by LTI SDEs give rise to kernel matrices that naturally admit an SSS representation. Building on this representation, we have developed efficient recursive algorithms, with linear complexity in the number of data points, for all operations required for GP regression, including Cholesky and LDL$^\top$ factorization, forward and backward substitution, computation of the diagonal of the inverse, and the computation of the trace of the product of the inverse with an SSS matrix.

In contrast to previous applications of SSS representations to GP regression, we have also developed efficient methods for computing the analytic gradient of the log marginal likelihood. In particular, we derive a novel algorithm for $\tr(\bL^{-1}\bA\bL^{-\top})$ that operates directly on three SSS matrices, without forming intermediate SSS matrices. The resulting algorithms have been applied to several kernels, including the Matérn kernel with half-integer smoothness, the diagonal-correlated kernel, the spline kernel, and the stable spline kernel.

Importantly, our approach is not limited to any particular kernel family or estimation criterion. The algorithms apply to any kernel matrix that admits an SSS representation, including kernel matrices generated by both stationary and nonstationary processes, and the same recursions extend to analogous linear-algebra computations required in other kernel-based estimation problems. Moreover, the SSS structure naturally handles matrix-valued outputs through the block structure of the SSS generators, enabling joint observations of a function and its derivatives.

A natural direction for future work is to derive a corresponding trace algorithm for hierarchical semiseparable (HSS) matrices. This would extend the proposed framework to kernels that do not admit an SSS representation but can be efficiently approximated using the more general HSS structure.

\appendix
\crefalias{section}{appendix}
\crefalias{subsection}{appendix}
\crefalias{subsubsection}{appendix}
\section{Additional algorithms}\label{app:additional:algorithms}
In this appendix, we provide additional algorithms for forward and backward substitution and for computing an SSS representation of the inverse of a nonsingular, lower-triangular SSS matrix $\bL$. Moreover, we also state conditions under which $\bL^{-1}$ is lower block bidiagonal.

\subsection{Forward and backward substitution}\label{sub:sec:triangular:solve}
Given a nonsingular, lower triangular SSS matrix $\bL$ of the form \eqref{eq:L} with entries $\bL_{ij} = \bU_i\bT_{i-1:j+1}\btV_j^\top$ for $i > j$ and $\bL_{ii} = \btD_i$, the solution of the linear system of equations $\bL\bx = \vo$ can be computed efficiently using forward substitution. The algorithm is based on a standard state-space recursion with initial state $\vh_1 = \bzero$ and
\begin{align}\label{eq:forward:sub}
	\left.
	\begin{aligned}
		\bx_i     & = \btD_i^{-1}\left(\vo_i - \bU_i\vh_i\right) \\
		\vh_{i+1} & = \bT_i\vh_i + \btV_i^\top\bx_i
	\end{aligned}\right\} \quad i \in \{1, \ldots, n\}.
\end{align}
\Cref{alg:forward:sub} summarizes the forward substitution algorithm. Similarly, \Cref{alg:backward:sub} solves $\bL^\top \bx = \by$ via backsubstitution. Both algorithms have a computational complexity of $\cO(n(d^2+p^2))$.

\begin{algorithm}[H]
	\caption{Solve $\bL\bx=\vo$ via forward substitution}
	\label{alg:forward:sub}
	\KwIn{$\vo\in\reals^{nd}$ and SSS representation of $\bL$ of the form \eqref{eq:L}.}
	\KwOut{$\bx = \bL^{-1}\vo$.}
	$\vh = \bzero_{p}$\;
	\For{$i = 1$ \KwTo $n$}{
		$\bx_i \gets \btD_i^{-1} (\vo_i - \bU_i\vh)$\;
		$\vh \gets \bT_i\vh + \btV_i^\top\bx_i$\;
	}
\end{algorithm}

\begin{algorithm}[H]
	\caption{Solve $\bL^\top\bx=\vo$ via backward substitution}
	\label{alg:backward:sub}
	\KwIn{$\vo\in\reals^{nd}$ and SSS representation of $\bL$ of the form \eqref{eq:L}.}
	\KwOut{$\bx = \bL^{-\top}\vo$.}
	$\vh = \bzero_{p}$\;
	\For{$i = n$ \KwTo $1$}{
		$\bx_i \gets \btD_i^{-\top} (\vo_i - \btV_i\vh)$\;
		$\vh \gets \bT_i^\top\vh + \bU_i^\top\bx_i$\;
	}
\end{algorithm}

\subsection{Inverse of lower-triangular SSS matrix}\label{sub:sec:explicit:inverse}

Let $\bL$ be a nonsingular, lower-triangular SSS matrix of the form \eqref{eq:L}. The inverse $\bL^{-1}$ is also lower-triangular and SSS representable. To see this, consider the state-space model defined by the forward substitution recursion \eqref{eq:forward:sub}. Substituting $\bx_i = \btD_i^{-1}(\vo_i - \bU_i\vh_i)$ into the state recursion $\vh_{i+1} = \bT_i\vh_i + \btV_i^\top\bx_i$ eliminates $\bx_i$ from the state equation, resulting in a state-space model with input $\vo_i$ and output $\bx_i$, i.e.,
\begin{align}\label{eq:inverse:ssm}
	\left.
	\begin{aligned}
		\bx_i     & = \underbrace{-\btD_i^{-1}\bU_i}_{\boU_i}\vh_i + \underbrace{\btD_i^{-1}}_{\boD_i}\vo_i                                     \\
		\vh_{i+1} & = \underbrace{(\bT_i - \btV_i^\top\btD_i^{-1}\bU_i)}_{\boT_i}\vh_i + \underbrace{\btV_i^\top\btD_i^{-1}}_{\boV_i^\top}\vo_i
	\end{aligned} \right\} \quad i \in \{1, \ldots, n\},
\end{align}
with initial state $\vh_1 = \bzero$. The output of this state-space model is $\bx = \bL^{-1}\vo$, and hence $\bL^{-1}$ has block entries
\[ \bL^{-1} =
	\begin{cases}
		\boU_i\boT_{i-1:j+1}\boV_j^\top, & i > j, \\
		\boD_i,                          & i = j, \\
		\bzero,                          & i < j.
	\end{cases}
\]
An equivalent form of the inverse can be found in \cite{chandrasekaran2003b}.

\begin{algorithm}[H]
	\caption{Generators of inverse of lower-triangular SSS matrix}
	\label{alg:explicit:inverse}
	\KwIn{Lower-triangular SSS matrix $\bL$ of the form \eqref{eq:L}, defined by block-diagonal matrices $(\bU,\bT,\btV,\btD)$.}
	\KwOut{Block-diagonal matrices $(\boU, \boT, \boV, \boD)$ such that $\bL^{-1} = \boU(\bI-\bN^p\boT)^{-1}\bN^p\boV^\top + \boD $.}
	\For{$i = 1$ \KwTo $n$}{
		$\boD_i \gets \btD_i^{-1}$\;
		$\boV_i^\top \gets \btV_i^\top\boD_i$\;
		$\boU_i \gets -\boD_i\bU_i$\;
		$\boT_i \gets \bT_i - \boV_i^\top\bU_i$\;
	}
\end{algorithm}

\subsection{Structure of inverse Cholesky factor}\label{app:block:tridiagonal}
Consider a positive definite kernel matrix $\bK$ of the form \eqref{eq:sss:symmetric} with generators $(\bU, \bT, \bV, \bD)$ defined by \eqref{eq:sss:generators:nonstationary}, i.e.,
\[ 	\bU_i = \bC_i, \quad
	\bT_i = \eu^{\bF \Delta t_{i}}, \quad
	\bV_i = \bC_i\bP_i \bT_i^\top, \quad
	\bD_i = \bC_i\bP_i\bC_i^\top, \quad i \in \{1,\ldots,n\}.\]
We will characterize when $\bK^{-1}$ is block tridiagonal or, equivalently, when the Cholesky factorization $\bK=\bL\bL^\top$ yields a Cholesky factor of the form \eqref{eq:L} whose inverse is block bidiagonal. From \eqref{eq:offdiag} and the definition of $\bV_i$, we have
\begin{align*}
	\btV_i^\top = (\bV_i^\top - \bT_i\bm{\Psi}_i\bC_i^\top)\btD_i^{-\top} = (\bT_i\bP_i\bC_i^\top - \bT_i\bm{\Psi}_i\bC_i^\top)\btD_i^{-\top} = \bT_i(\bP_i - \bm{\Psi}_i)\bC_i^\top\btD_i^{-\top},
\end{align*}
and from \eqref{eq:diag} and the definition of $\bU_i$, we have
\begin{align*}
	\btD_i\btD_i^\top = \bC_i\bP_i\bC_i^\top - \bC_i\bm{\Psi}_i\bC_i^\top = \bC_i(\bP_i - \bm{\Psi}_i)\bC_i^\top.
\end{align*}
Using these two expressions while defining
\[\bP = \blkdiag(\bP_1,\ldots,\bP_n),\qquad \bm{\Psi} = \blkdiag(\bm{\Psi}_1,\ldots,\bm{\Psi}_n),\]
we can rewrite $\bL$ as
\begin{align*}
	\bL
	 & = \bC(\bI-\bN^p\bT)^{-1}\bN^p\btV^\top + \btD                                                           \\
	 & = \bC(\bI-\bN^p\bT)^{-1}\bN^p\bT(\bP - \bm{\Psi})\bC^\top\btD^{-\top} + \btD                            \\
	 & = (\bC(\bI-\bN^p\bT)^{-1}\bN^p\bT(\bP - \bm{\Psi})\bC^\top + \btD\btD^\top)\btD^{-\top}                 \\
	 & = (\bC(\bI-\bN^p\bT)^{-1}\bN^p\bT(\bP - \bm{\Psi})\bC^\top + \bC(\bP - \bm{\Psi} )\bC^\top)\btD^{-\top} \\
	 & = \bC((\bI-\bN^p\bT)^{-1}\bN^p\bT + \bI)(\bP - \bm{\Psi})\bC^\top\btD^{-\top}.
\end{align*}
We then note that $(\bI-\bN^p\bT)^{-1}\bN^p\bT + \bI = (\bI-\bN^p\bT)^{-1}$, which can be verified using \eqref{eq:blk:bidiag}, leading to the simplified expression
\[ \bL = \bC(\bI-\bN^p\bT)^{-1}(\bP-\bm{\Psi})\bC^\top\btD^{-\top}. \]
All the factors in this expression, except for the factor $(\bI-\bN^p\bT)^{-1}$, are block diagonal matrices. Thus, if $\bC$ is nonsingular (i.e., the blocks $\bC_1,\ldots,\bC_n$ are square and full rank), then
\[ \bL^{-1} = \btD^\top \bC^{-\top} (\bP-\bm{\Psi})^{-1}(\bI -\bN^p\bT) \bC^{-1}\]
is block bidiagional and $\bK^{-1}$ is block tridiagonal.

\begin{remark}
	For LTI-SDE-induced kernels, the structure of $\bL$ highlights the correspondence between $\bK$ and the discrete-time model as
	\begin{align*}
		\bK = \bL\bL^\top = \bC(\bI - \bN^p\bT)^{-1}(\bP-\bm{\Psi})(\bI - \bN^p\bT)^{-\top}\bC^\top.
	\end{align*}
	In other words, we can view $\bK$ as the covariance matrix for the output $\by_k = \bC_k\bx_k$ of a discrete process $\bx_{k+1} = \bT_k\bx_k + \bm{\varepsilon}_k$ with $\bm{\varepsilon}_k \sim \cN(\bzero, \bP_k - \bm{\Psi}_k)$.
\end{remark}

\section{The LDL factorization}\label{app:sec:ldl}
The algorithms in \Cref{sec:algorithms,sec:gp:regression} are based on the Cholesky factorization $\bK = \bL\bL^\top$ of a symmetric positive definite SSS matrix $\bK$. Similar algorithms can be derived based on the LDL$^\top$ factorization $\bK = \bL\btD\bL^\top$, where $\bL$ is unit lower triangular and $\btD$ is block diagonal. This approach is similar to the innovation form of the Kalman filter, which is based on the LDL$^\top$ factorization of the covariance matrix of the innovations.

Given the SSS generators $(\bU,\bT,\bV,\bD)$ of a symmetric positive definite matrix $\bK$ of the form \eqref{eq:sss:symmetric}, the LDL$^\top$ factorization $\bK = \bL \btD\bL^{\top}$ yields a unit lower-triangular $\bL$ of the form
\begin{equation}\label{eq:L:ldl}
	\bL = \bU (\bI - \bN^p\bT)^{-1}\bN^p\btV^\top + \bI,
\end{equation}
with entries $\bL_{ij} = \bU_i\bT_{i-1:j+1}\btV_j^\top$ for $i > j$ and $\bL_{ii} = \bI_d$. Expanding $(\bL\btD\bL^\top)_{ij}$ for $i > j$ and using $\bT_{i-1:k+1}=\bT_{i-1:j+1}\bT_j\bT_{j-1:k+1}$ gives
\begin{align*}
	(\bL\btD\bL^\top)_{ij}
	 & = \sum_{k=1}^{j-1}\bU_i\bT_{i-1:k+1}\btV_k^\top\btD_k\btV_k\bT_{j-1:k+1}^\top\bU_j^\top + \bU_i\bT_{i-1:j+1}\btV_j^\top\btD_j \\
	 & = \bU_i\bT_{i-1:j+1}\left(\bT_j\bm{\Psi}_j\bU_j^\top + \btV_j^\top\btD_j\right),
\end{align*}
where $\bm{\Psi}_j = \sum_{k=1}^{j-1}\bT_{j-1:k+1}\btV_k^\top\btD_k\btV_k\bT_{j-1:k+1}^\top$. Matching with $\bK_{ij} = \bU_i\bT_{i-1:j+1}\bV_j^\top$ and the diagonal $(\bL\btD\bL^\top)_{ii} = \bU_i\bm{\Psi}_i\bU_i^\top + \btD_i = \bD_i$ yields the recursions
\begin{align*}
		\btD_i          & = \bD_i - \bU_i\bm{\Psi}_i\bU_i^\top,                                               \\
		\btV_i^\top     & = (\bV_i^\top - \bT_i\bm{\Psi}_i\bU_i^\top)\btD_i^{-1},                             \\
		\bm{\Psi}_{i+1} & = \bT_i\bm{\Psi}_i\bT_i^\top + \btV_i^\top\btD_i\btV_i, \quad \bm{\Psi}_1 = \bzero.
	\end{align*}
\Cref{alg:ldl} summarizes the LDL$^\top$ factorization, which has a computational complexity of $\cO(n(d^2+p^2))$.

\begin{algorithm}[H]
	\caption{LDL$^\top$ factorization of symmetric positive definite SSS matrix $\bK$}
	\label{alg:ldl}
	\KwIn{$\bK$ defined by block-diagonal matrices $(\bU,\bT,\bV,\bD)$.}
	\KwOut{Block-diagonal matrices $\btV$ and $\btD$ such that $\bL=\bU(\bI-\bN^p\bT)^{-1}\bN^p\btV^\top + \bI$ and $\bK = \bL\btD\bL^\top$.}
	$\bm{\Psi} \gets \bzero_{p \times p}$\;
	\For{$i = 1$ \KwTo $n$}{
		$\bM \gets \bT_i\bm{\Psi}$\;
		$\btV_i^\top \gets \bV_i^\top - \bM\bU_i^\top$\;
		$\btD_i \gets \bD_i - \bU_i\bm{\Psi}\bU_i^\top$\;
		$\btV_i^\top \gets \btV_i^\top\btD_i^{-1}$\;
		$\bm{\Psi} \gets \bM\bT_i^\top + \btV_i^\top\btD_i\btV_i$\;
	}
    \Return{$(\btV_1,\ldots,\btV_n)$, $(\btD_1,\ldots,\btD_n)$}
\end{algorithm}

\subsection{SSS representation of inverse of $\bL$}
An SSS representation of $\bL^{-1}$ can be computed using \Cref{sub:sec:explicit:inverse} by letting $\btD_i = \bI$, i.e.,
\begin{align}\label{eq:ldl:Linv}
\bL^{-1} = \boU (\bI -\bN^p \boT)^{-1}\bN^p\boV^\top + \boD,
\end{align}
where $\boV_i = \btV_i$, $\boU_i = -\bU_i$, $\boT_i = \bT_i - \btV_i^\top\bU_i$, and $\boD_i=\bI$ for $i\in\{1,\ldots,n\}$.

\subsection{Diagonal blocks of $\bK^{-1}$}\label{sub:sec:diagonal:ldl}
The general SSS matrix-matrix multiplication algorithm of \cite{chandrasekaran2003b} cannot readily be applied for the computation of the diagonal blocks of $\bK^{-1}$ based on the LDL$\top$ factorization of $\bK$. However, the idea behind \Cref{alg:diagonal:inverse} can be used to derive a similar algorithm based on the LDL$^\top$ factorization without forming any intermediate SSS matrices. Given the LDL$^\top$ factorization $\bK = \bL\btD\bL^\top$ with $\bL$ unit lower triangular, the inverse is given by $\bK^{-1} = \bL^{-\top}\btD^{-1}\bL^{-1}$, where $\bL^{-1}$ is of the form \eqref{eq:ldl:Linv}. Using that $(\bL^{-1})_{ji} = \boU_j\boT_{j-1:i+1}\btV_i^\top$ for $j > i$ (which follows by noting that $\boV_i^\top = \btV_i^\top$), we have
\begin{equation}
	(\bK^{-1})_{ii} = \sum_{j=i}^{n}(\bL^{-1})_{ji}^\top\btD_j^{-1}(\bL^{-1})_{ji} =\btD_i^{-1} + \btV_i\bm{\Phi}_i\btV_i^\top,
\end{equation}
where $\bm{\Phi}_i = \sum_{j=i+1}^{n}\boT_{j-1:i+1}^\top\boU_j^\top\btD_j^{-1}\boU_j\boT_{j-1:i+1}$. Again, $\bm{\Phi}_i$ can be computed efficiently using a backward recursion
\begin{equation}
    \bm{\Phi}_n = \bzero, \qquad \bm{\Phi}_{i-1} = \boU_i^\top\btD_i^{-1}\boU_i + \boT_i^\top\bm{\Phi}_i\boT_i, \qquad i=n,\ldots,2.
\end{equation}
Compared to the Cholesky case, the term $\boD_i^\top\boD_i$ is replaced by $\btD_i^{-1}$.

\subsection{Trace of $\bK^{-1}\bA$}\label{sub:sec:trace:ldl}
\Cref{alg:trace} can be adapted to compute the trace of $\bK^{-1}\bA$ using the LDL$^\top$ factorization of $\bK$. Indeed, if $\bK = \bL \btD \bL^\top$, then
\begin{equation}
	\tr(\bK^{-1}\bA)
	= \tr\left(\btD^{-1}\bL^{-1}\bA\bL^{-\top}\right)
	= \sum_{i=1}^n \tr\left(\btD_i^{-1}\bm{\Theta}_i\right),
\end{equation}
where $\bL^{-1}$ is of the form \eqref{eq:ldl:Linv} and $\bm{\Theta}_i$ is the $i$th diagonal block of $\bL^{-1}\bA\bL^{-\top}$. This can be computed efficiently using a minor modification of \Cref{alg:trace}.

\subsection{Diagonal blocks of the posterior covariance}

\Cref{alg:posterior:variance} can be adapted to compute the diagonal blocks of the posterior covariance using the LDL$^\top$ factorization of $\hat{\bK}$. First, we take the limit of the LDL$^\top$ factorization as $\gamma\rightarrow\infty$, while storing the intermediate $\bm{\Psi}_i$. The diagonal blocks of the posterior covariance are then computed by using a backward recursion from $N$ to $1$. \Cref{alg:posterior:variance:ldl} summarizes the resulting algorithm.

\begin{algorithm}[H]
    \caption{Diagonal blocks of posterior covariance using the LDL$^\top$ factorization}
    \label{alg:posterior:variance:ldl}
    \KwIn{SSS matrix $\hat{\bK}$ of the form \eqref{eq:Khat} with SSS generators $(\bU, \bT, \bV, \bD)$, $\hat{\bm{\Sigma}}(0)$, and index set $\cI$ containing the indices of the training inputs.}
    \KwOut{Diagonal blocks $\hat{\bm{\Omega}}_1,\ldots,\hat{\bm{\Omega}}_N$ of $\lim_{\gamma\to\infty} (\hat{\bK} - \hat{\bK}(\hat{\bK} + \hat{\bm{\Sigma}}(\gamma))^{-1}\hat{\bK})$.}
    $\bm{\Psi}_1 \gets \bzero_{p\times p}$\;
    \For{$i=1$ \KwTo $N$}{
        \eIf{$i \in \cI$}{
            $\btD_i \gets \bD_i - \bU_i\bm{\Psi}_i\bU_i^\top + \hat{\bm{\Sigma}}_i$\;
            $\btV_i^\top \gets (\bV_i^\top - \bT_i\bm{\Psi}_i\bU_i^\top)\btD_i^{-1}$\;
            $\boT_i \gets \bT_i - \btV_i^\top\bU_i$\;
            $\bA_i \gets \btD_i^{-1}\hat{\bm{\Sigma}}_i$\;
            $\bE_i \gets \btD_i^{-1}$
        }{
            $\btV_i^\top \gets \bzero$, $\boT_i \gets \bT_i$, $\bA_i \gets \bI$, $\bE_i \gets \bzero$, $\btD_i \gets \bzero$\;
        }
            $\bm{\Psi}_{i+1} \gets \bT_i\bm{\Psi}_i\bT_i^\top + \btV_i^\top\btD_i\btV_i$\;
    }

    $\bm{\Phi} \gets \bzero_{p\times p}$\;
    \For{$i = N$ \KwTo $1$}{
        $\bW_i^\top \gets (\bV_i^\top - \bT_i\bm{\Psi}_i\bU_i^\top)\bA_i$\;
        $\hat{\bm{\Omega}}_i \gets (\bD_i - \bU_i\bm{\Psi}_i\bU_i^\top)\bA_i - \bW_i\bm{\Phi}\bW_i^\top$\;
        $\bm{\Phi} \gets \boT_i^\top\bm{\Phi}\boT_i + \bU_i^\top\bE_i\bU_i$\;
    }
    \Return{$(\hat{\bm{\Omega}}_1,\ldots,\hat{\bm{\Omega}}_N)$}
\end{algorithm}

\subsection{Log marginal likelihood}\label{app:kalman:marginal}
Recall the log marginal likelihood function
\begin{equation*}
	\log p(\by) = -\frac{1}{2}\by^\top(\bK + \bm{\Sigma})^{-1}\by - \frac{1}{2}\log \det(\bK + \bm{\Sigma}) - \frac{nd}{2}\log(2\pi).
\end{equation*}
Letting $\bK+\bm{\Sigma} = \bL\btD\bL^\top$ and $\bz=\bL^{-1}\by$, we can rewrite the log marginal likelihood as
\begin{align*}
		\log p(\by)
		 & = -\frac{1}{2}\sum_{i=1}^n \bz_i^\top\btD_i^{-1}\bz_i - \frac{1}{2}\sum_{i=1}^n \log \det(\btD_i) - \frac{nd}{2}\log(2\pi),
\end{align*}
which coincides with the filtering marginal likelihood from \cite{sarkka:kalman}. The subvector $\bz_i$ may be viewed as the $i$th  innovation with covariance $\btD_i$.

\section{Computing the stationary covariance}\label{app:computing:Pinf}

Consider the continuous-time state-space model in \eqref{eq:sde}, and assume that $\bF$ is a Hurwitz matrix of order $m+1$ in companion form, i.e.,
\[ \bF = \begin{bmatrix}
		0    & 1                          \\
		     & \ddots & \ddots            \\
		     &        & 0        & 1      \\
		-c_0 & \cdots & -c_{m-1} & -c_{m}
	\end{bmatrix} = \bN^\top - \be_{m+1}\bc^\top \]
where $\bc = (c_0,\ldots,c_{m})$ are the coefficients of the characteristic polynomial of $\bF$, and $\bN$ is the lower shift matrix of order $m+1$.
Recall that the stationary state covariance $\bP_\infty$ is the solution to the Lyapunov equation
\[ \bF \bP_\infty + \bP_\infty \bF^\top + \bG \bQ_{\bw} \bG^\top = 0. \]
With our assumptions on $\bF$, and letting $\bP = \bP_\infty$, this equation can be rewritten as
\begin{equation}
	(\bN^\top - \be_{m+1}\bc^\top)\bP + \bP(\bN - \bc \be_{m+1}^\top) + \bG \bQ_{\bw} \bG^\top = 0,
\end{equation}
the $(i,j)$th element of which can be expressed as
\begin{align*}
	\be_i^\top(\bN^\top - \be_{m+1}\bc^\top)\bP\be_j + \be_i^\top\bP(\bN - \bc \be_{m+1}^\top)\be_j + \be_i^\top \bG \bQ_{\bw} \bG^\top \be_j = 0.
\end{align*}
Noting that
\begin{align*}
	\bN \be_i = \begin{cases}
		            \be_{i+1}, & 1 \leq i \leq m, \\
		            0,         & i = m+1,
	            \end{cases}
	\qquad \text{and} \qquad
	\bN^\top \be_i =
	\begin{cases}
		0,         & i = 1,               \\
		\be_{i-1}, & 2 \leq i \leq {m+1},
	\end{cases}
\end{align*}
we see that for $i,j \in \{1,\ldots,m\}$, the $(i,j)$th element of the Lyapunov equation reduces to
\begin{align}\label{eq:lyapunov:ij}
	\bP_{i+1,j} + \bP_{i,j+1} = -\be_i^\top \bG \bQ_{\bw} \bG^\top \be_j,
\end{align}
and similarly, for $i = m+1$, we have
\begin{align*}
	\begin{cases}
		\bP_{m+1,j+1} - \bc^\top\bP\be_j = -\be_{m+1}^\top \bG \bQ_{\bw} \bG^\top \be_j,                 & 1\leq j\leq m, \\
		\bc^\top\bP\be_{m+1} + \be_{m+1}^\top \bP \bc = \be_{m+1}^\top \bG \bQ_{\bw} \bG^\top \be_{m+1}, & j = m+1.
	\end{cases}
\end{align*}

For special cases of the input matrix $\bG$ and the spectral density $\bQ_{\bw}$, the Lyapunov equation simplifies further. For example, if $\bG \bQ_{\bw} \bG^\top = q_w \be_{m+1}\be_{m+1}^\top$ as in the Matérn  process, the condition \eqref{eq:lyapunov:ij} reduces to
\begin{align*}\label{eq:lyapunov:ij:special}
	\bP_{i+1,j} + \bP_{i,j+1} = 0, \quad i,j\in\{1,\ldots,m\}.
\end{align*}
This implies that the entries along the antidiagonals of $\bP$ have the same absolute value but alternating signs, and by symmetry, the entries of $\bP$ are fully determined by the diagonal entries $\{\bP_{ii}\}_{i=1}^{m+1}$. For example, when $m=4$, $\bP$ takes the form
\[ \bP = \begin{bmatrix}
		p_1  & 0    & -p_2 & 0    & p_3  \\
		0    & p_2  & 0    & -p_3 & 0    \\
		-p_2 & 0    & p_3  & 0    & -p_4 \\
		0    & -p_3 & 0    & p_4  & 0    \\
		p_3  & 0    & -p_4 & 0    & p_5
	\end{bmatrix}.
\]
More generally, we can parameterize $\bP$ in terms of the shift matrix $\bN$ and a diagonal matrix $\Mf=\diag(p_1,\ldots,p_{m+1})$ such that
\begin{align}
	\bP = \Mf + \sum_{k=1}^{\lfloor m/2\rfloor} (-1)^k \left[ \bN^k \Mf \bN^k + (\bN^k \Mf \bN^k)^\top \right],
\end{align}
and hence the Lyapunov equation reduces to a linear system of $m+1$ equations in the diagonal entries $\{p_i\}_{i=1}^{m+1}$,
\begin{align*}
	\begin{cases}
		\bP_{m+1,j+1} = \bc^\top\bP\be_j = 0,                & 1\leq j\leq m, \\
		\bc^\top\bP\be_{m+1} + \be_{m+1}^\top \bP \bc = q_w, & j = m+1.
	\end{cases}
\end{align*}

\section{Diagonal blocks of the posterior covariance}\label{app:posterior:variance}
We now derive \Cref{alg:posterior:variance}, starting from the expression \eqref{eq:gp:posterior:cov:perturbed}, i.e.,
\begin{align*}
	\hat \bK -\hat \bK (\hat \bK + \hat{\bm{\Sigma}}(\gamma))^{-1} \hat \bK = \hat{\bm{\Sigma}}(\gamma) -\hat{\bm{\Sigma}}(\gamma)\hat{\bM}(\gamma)^{-1}\hat{\bm{\Sigma}}(\gamma),
\end{align*}
with $\hat{\bM}(\gamma) = \hat{\bK} + \hat{\bm{\Sigma}}(\gamma)$, where the SSS matrices $\hat{\bK}$ and $\hat{\bm{\Sigma}}$ are defined in \Cref{sub:sec:posterior}. We are interested in the diagonal blocks of this expression in the limit as $\gamma\rightarrow\infty$. For $\gamma$ positive and finite, $\hat{\bM}(\gamma)$ is positive definite, and we can compute its Cholesky factorization $\hat{\bM}(\gamma) = \hat{\bL}(\gamma)\hat{\bL}(\gamma)^\top$ and the corresponding SSS representation of $\hat{\bL}(\gamma)^{-1}$ using \Cref{alg:cholesky,alg:explicit:inverse}, i.e.,
\begin{align*}
	\hat{\bL}(\gamma)      & = \bU(\bI - \bN\bT)^{-1}\bN\btV^\top + \btD,   &
	\hat{\bL}(\gamma)^{-1} & = \boU(\bI - \bN\boT)^{-1}\bN\boV^\top + \boD.
\end{align*}
We note that the diagonal blocks of $\hat{\bM}(\gamma)$ are of the form $\bD_i(\gamma) = \hat{\bK}_{ii} + \hat{\bm{\Sigma}}_i(\gamma)$, and hence the SSS generators $\btV$ and $\btD$, as well as those of $\hat{\bL}(\gamma)^{-1}$, implicitly depend on $\gamma$.
The $i$th diagonal block of \eqref{eq:gp:posterior:cov:perturbed} can then be expressed as
\begin{align}\label{eq:gp:posterior:cov:perturbed:block}
	\hat{\bm{\Omega}}_i(\gamma) = \hat{\bm{\Sigma}}_i(\gamma) - \hat{\bm{\Sigma}}_i(\gamma) \bm{\Omega}_i(\gamma) \hat{\bm{\Sigma}}_i(\gamma),
\end{align}
where $\bm{\Omega}_i(\gamma)$ is the $i$th diagonal block of $\hat{\bM}(\gamma)^{-1}$ and $\hat{\bm{\Sigma}}_i(\gamma)$ denotes the $i$th diagonal block of $\hat{\bm{\Sigma}}(\gamma)$. From \eqref{eq:inverse:diagonal}, we have
\begin{align*}
	\bm{\Omega}_i(\gamma) = \boV_i(\gamma) \bm{\Phi}_i(\gamma) \boV_i(\gamma)^\top + \boD_i(\gamma)^\top \boD_i(\gamma),
\end{align*}
where we have made the dependence on $\gamma$ explicit. Inserting this into \eqref{eq:gp:posterior:cov:perturbed:block}, and using the fact that $\boD_i(\gamma) = \btD_i(\gamma)^{-1}$, we can express the $i$th diagonal block of \eqref{eq:gp:posterior:cov:perturbed} as
\begin{align}
	\notag
	\hat{\bm{\Omega}}_i(\gamma) & =	\hat{\bm{\Sigma}}_i(\gamma) - \hat{\bm{\Sigma}}_i(\gamma) \bm{\Omega}_i(\gamma) \hat{\bm{\Sigma}}_i(\gamma)                                                      \\
	\notag
	                            & =
	\hat{\bm{\Sigma}}_i(\gamma) - \hat{\bm{\Sigma}}_i(\gamma) \left( \boV_i(\gamma) \bm{\Phi}_i(\gamma) \boV_i(\gamma)^\top + \boD_i(\gamma)^\top \boD_i(\gamma) \right) \hat{\bm{\Sigma}}_i(\gamma) \\
	\label{eq:gp:posterior:cov:blocks:gamma}
	                            & =
	(\btD_i(\gamma)\btD_i(\gamma)^\top - \hat{\bm{\Sigma}}_i(\gamma))\boD_i(\gamma)^\top \boD_i(\gamma)\hat{\bm{\Sigma}}_i(\gamma) \journallayout{\\ & \quad}{}-\hat{\bm{\Sigma}}_i(\gamma) \boV_i(\gamma) \bm{\Phi}_i(\gamma) \boV_i(\gamma)^\top \hat{\bm{\Sigma}}_i(\gamma). \journallayout{\notag}{}
\end{align}
From \eqref{eq:diag}, we find that
\begin{align*}
	\btD_i(\gamma)\btD_i(\gamma)^\top - \hat{\bm{\Sigma}}_i(\gamma) = \bD_i(\gamma) - \hat{\bm{\Sigma}}_i(\gamma) - \bU_i \bm{\Psi}_i(\gamma) \bU_i^\top =	\hat{\bK}_{ii} - \bU_i \bm{\Psi}_i(\gamma) \bU_i^\top,
\end{align*}
and from the definition of $\boV_i^\top$ in \eqref{eq:inverse:ssm} and \eqref{eq:offdiag}, we have
\begin{align*}
	\boV_i(\gamma)^\top = (\bV_i^\top - \bT_i\bm{\Psi}_i(\gamma)\bU_i^\top) \boD_i(\gamma)^{\top} \boD_i(\gamma) ).
\end{align*}
Using these expressions and letting $\bA_i(\gamma) = \boD_i(\gamma)^{\top} \boD_i(\gamma) \hat{\bm{\Sigma}}_{i}(\gamma)$, we can express \eqref{eq:gp:posterior:cov:blocks:gamma} as
\begin{align*}
	\hat{\bm{\Omega}}_i(\gamma) & = (\hat{\bK}_{ii} - \bU_i \bm{\Psi}_i(\gamma) \bU_i^\top) \bA_i(\gamma)
	\journallayout{\\ & \quad}{}- \bA_i(\gamma)^\top (\bV_i^\top - \bT_i\bm{\Psi}_i(\gamma)\bU_i^\top)^\top \bm{\Phi}_i(\gamma) (\bV_i^\top - \bT_i\bm{\Psi}_i(\gamma)\bU_i^\top) \bA_i(\gamma).
\end{align*}
To find the limit as $\gamma \to \infty$, we first examine the asymptotic behavior for the SSS generators of $\hat{\bL}(\gamma)$. From \eqref{eq:diag}, we have
\begin{align*}
	\btD_{i}(\gamma) =
	\begin{cases}
		\chol(\hat{\bK}_{ii}  -\bU_i \bm{\Psi}_i(\gamma)\bU_i^\top + \hat{\bm{\Sigma}}_i), & i \in \cI,   \\
		\chol(\hat{\bK}_{ii}  -\bU_i \bm{\Psi}_i(\gamma)\bU_i^\top + \gamma \bI),          & i \in \cI_*,
	\end{cases}
\end{align*}
and from \eqref{eq:cholesky:psi:recursion} and \eqref{eq:offdiag}, we have $\bm{\Psi}_{i+1}(\gamma) = \bT_i\bm{\Psi}_i(\gamma)\bT_i^\top + \btV_i(\gamma)^\top\btV_i(\gamma)$ and $\btV_i^\top = (\bV_i^\top - \bT_i\bm{\Psi}_i(\gamma)\bU_i^\top) \btD_i^{-\top}$. Thus, the SSS generators of $\hat{\bL}(\gamma)$ satisfy the asymptotic equivalences
\begin{align*}
	\btD_{i}(\gamma)    & \sim \begin{cases}
		                           \chol(\hat{\bK}_{ii}  -\bU_i \bm{\Psi}_i(\gamma)\bU_i^\top + \hat{\bm{\Sigma}}_i), & i \in\cI,    \\
		                           \gamma^{1/2} \, \bI,                                                               & i \in \cI_*,
	                           \end{cases} \journallayout{\\}{&}
	\btV_i(\gamma)^\top & \sim \begin{cases}
		                           (\bV_i^\top - \bT_i\bm{\Psi}_i(\gamma)\bT_i^\top) \btD_i(\gamma)^{-\top}, & i \in \cI,   \\
		                           \gamma^{-1/2} (\bV_i^\top - \bT_i\bm{\Psi}_i(\gamma)\bU_i^\top),          & i \in \cI_*,
	                           \end{cases}
\end{align*}
and similarly, for $i \in \{1,\ldots,n-1\}$,
\begin{align*}                                                                                          \\
	\bm{\Psi}_{i+1}(\gamma) & \sim \begin{cases}
		                               \bT_i\bm{\Psi}_i(\gamma)\bT_i^\top + \btV_i(\gamma)^\top\btV_i(\gamma),                                                                                      & i\in\cI,     \\
		                               \bT_i\bm{\Psi}_i(\gamma)\bT_i^\top + \gamma^{-1}(\bV_i^\top -  \bT_i \bm{\Psi}_i(\gamma)\bU_i^\top) (\bV_i^\top - \bT_i \bm{\Psi}_i(\gamma)\bU_i^\top)^\top, & i \in \cI_*.
	                               \end{cases}
\end{align*}
We note that $\bm{\Psi}_{i+1}(\gamma)$ is symmetric positive semidefinite by construction, and its limit as $\gamma \to \infty$ is well-defined and finite. This follows by induction on $i$. Suppose $\bm{\Psi}_i(\gamma)$ is finite and positive semidefinite for all $\gamma > 0$. This is true for $i=1$, since $\bm{\Psi}_1(\gamma) = \bzero$. Then, by continuity, the recursive definition of $\bm{\Psi}_{i+1}(\gamma)$ implies that $\bm{\Psi}_{i+1}(\gamma)$ is also finite and positive semidefinite for all $\gamma > 0$. We then define $\bm{\Psi}_{i}(\infty) = \lim_{\gamma \to \infty} \bm{\Psi}_{i}(\gamma)$ and
\begin{align*}
	\boD_i(\infty) = \lim_{\gamma\to \infty} \btD_i(\gamma)^{-1} =
	\begin{cases}
		\chol(\hat{\bK}_{ii} -\bU_i\bm{\Psi}_i(\infty)\bU_i^\top + \hat{\bm{\Sigma}}_i)^{-1}, & i\in\cI,     \\
		\bzero,                                                                               & i \in \cI_*,
	\end{cases}
\end{align*}
and
\begin{align*}
	\btV_i(\infty)^\top = \lim_{\gamma \to \infty} \btV_i(\gamma)^\top = (\bV_i^\top - \bT_i\bm{\Psi}_i(\infty)\bU_i^\top) \boD_i(\infty)^{\top} , \quad i \in \{1,\ldots,N\}.
\end{align*}
This leads to
\begin{align}
	\label{eq:psi:infty}
	\bm{\Psi}_{i+1}(\infty) = \bT_i \bm{\Psi}_{i}(\infty) \bT_i^\top + \btV_i(\infty)^\top \btV_i(\infty), \quad i \in \{1,\ldots,N-1\},
\end{align}
where $\bm{\Psi}_1(\infty) = \bzero$. Using \eqref{eq:inverse:ssm} and \eqref{eq:offdiag}, we find that
\[\boT_i(\infty) = \lim_{\gamma\to\infty} \boT_i(\gamma) =  \bT_i - \boV_i(\infty)^\top \bU_i = \bT_i + \btV_i(\infty)^\top\boU_i(\infty),\]
and hence
\begin{align*}
	\bm{\Phi}_{i-1}(\infty) & = \boT_i^\top(\infty)  \bm{\Phi}_{i}(\infty) \boT_i(\infty) + \bU_i^\top\boD_i(\infty)^\top\boD_i(\infty) \bU_i
\end{align*}
Finally, we have
\begin{align*}
	\bA_i(\infty) = \lim_{\gamma\to \infty} \boD_i(\gamma)^\top \boD_i(\gamma)\hat{\bm{\Sigma}}_i(\gamma) = \begin{cases}
		                                                                                                        \boD_i(\infty)^\top \boD_i(\infty) \hat{\bm{\Sigma}}_{i}, & i \in \cI,   \\
		                                                                                                        \bI,                                                      & i \in \cI_*.
	                                                                                                        \end{cases}
\end{align*}
Combining these results, we can compute the diagonal blocks of the posterior covariance as
\begin{align*}
    \hat{\bm{\Omega}}_i(\infty) & = \lim_{\gamma\to\infty} \hat{\bm{\Omega}}_i(\gamma) \\
                                & = (\hat{\bK}_{ii} - \bU_i \bm{\Psi}_i(\infty) \bU_i^\top) \bA_i(\infty) \journallayout{\\ & \quad}{}- \bA_i(\infty)^\top (\bV_i^\top - \bT_i\bm{\Psi}_i(\infty)\bU_i^\top)^\top \bm{\Phi}_i(\infty) (\bV_i^\top - \bT_i\bm{\Psi}_i(\infty)\bU_i^\top) \bA_i(\infty),
\end{align*}
where $\bm{\Psi}_i(\infty)$, $\btV_i(\infty)$, $\boD_i(\infty)$, $\bm{\Phi}_i(\infty)$, and $\bA_i(\infty)$ are computed using the above expressions. This leads to \Cref{alg:posterior:variance}.

\section{Generalized Cross Validation}\label{app:gcv}
The GCV score can be expressed as
\begin{align}\label{eq:gp:gcv}
    \text{GCV}(\btheta) = \frac{\frac{1}{nd}\|(\bI-\bH)\by\|_2^2}{(1-\tr(\bH)/(nd))^2} = nd\,\frac{\|(\bI-\bH)\by\|_2^2}{(nd-\tr(\bH))^2}, \qquad \bH = \bK(\bK + \bm{\Sigma})^{-1},
\end{align}
and its partial derivatives with respect to the hyperparameters can be computed analytically as
\begin{align}\label{eq:gp:gcv:grad}
    \frac{\partial}{\partial \theta_k} \text{GCV}(\btheta) = nd\left[-\frac{2\by^\top(\bI-\bH)^\top \frac{\partial \bH}{\partial \theta_k}\by}{(nd-\tr(\bH))^2} + \frac{2\|(\bI-\bH)\by\|_2^2 \tr\!\left(\frac{\partial \bH}{\partial \theta_k}\right)}{(nd-\tr(\bH))^3} \right],
\end{align}
where
\begin{align} \label{eq:gp:gcv:grad:H}
\frac{\partial \bH}{\partial \theta_k} = \frac{\partial \bK}{\partial \theta_k} (\bK+\bm{\Sigma})^{-1} -\bK(\bK+\bm{\Sigma})^{-1} \frac{\partial (\bK+\bm{\Sigma})}{\partial \theta_k} (\bK+\bm{\Sigma})^{-1}.
\end{align}
The GCV score is readily evaluated in linear time using \Cref{alg:cholesky,alg:trace}. The partial derivatives of the GCV score can also be evaluated in linear time using \Cref{alg:cholesky,alg:trace,alg:diagonal:inverse}, where we note that the last term on the right-hand side of \eqref{eq:gp:gcv:grad:H} is the product of SSS matrices, which can be computed in linear time using the algorithm from \cite{chandrasekaran2003b}.

\end{document}